\newcommand{\alglinelabel}{%
  \addtocounter{ALC@line}{-1}% Reduce line counter by 1
  \refstepcounter{ALC@line}% Increment line counter with reference capability
  \label% Regular \label
}
\definecolor{sizzurp}{rgb}{0.8, 0.0, 0.8}
\def\E{{\mathbb E}}
\def\P{{\mathbb P}}
\def\R{{\mathbb R}}
\def\S{{\mathbb S}}
\DeclareMathOperator*{\argmin}{arg\,min}
\let\oldparagraph\paragraph
\renewcommand{\paragraph}[1]{\oldparagraph{#1.}}
\renewcommand{\epsilon}{\varepsilon}
\newcommand{\ind}{\mathbbm{1}}
\newcommand{\ldef}{\vcentcolon=}
\newcommand{\bigoh}{O}
\newcommand{\bigoht}{\wt{O}}
\newcommand{\bigom}{\Omega}
\newcommand{\bigomt}{\wt{\Omega}}
\newcommand{\polylog}{\mathrm{polylog}}
\renewcommand{\epsilon}{\varepsilon}
\newcommand{\indic}{\mathbb{I}}
\newtheorem{lemma}{Lemma}
\newtheorem{assumption}{Assumption}
\newcommand{\algcommentlight}[1]{\textcolor{blue!70!black}{\transparent{0.5}\small{\texttt{\textbf{//\hspace{2pt}#1}}}}}
\DeclarePairedDelimiter{\abs}{\lvert}{\rvert} %
\DeclarePairedDelimiter{\brk}{[}{]}
\DeclarePairedDelimiter{\crl}{\{}{\}}
\DeclarePairedDelimiter{\prn}{(}{)}
\DeclarePairedDelimiterX{\infdiv}[2]{(}{)}{%
  #1\;\delimsize\|\;#2%
}
\newcommand{\wt}[1]{\widetilde{#1}}
\newcommand{\wh}[1]{\widehat{#1}}
\def\ddefloop#1{\ifx\ddefloop#1\else\ddef{#1}\expandafter\ddefloop\fi}
\def\ddef#1{\expandafter\def\csname bb#1\endcsname{\ensuremath{\mathbb{#1}}}}
\def\ddefloop#1{\ifx\ddefloop#1\else\ddef{#1}\expandafter\ddefloop\fi}
\def\ddef#1{\expandafter\def\csname b#1\endcsname{\ensuremath{\mathbf{#1}}}}
\def\ddef#1{\expandafter\def\csname sf#1\endcsname{\ensuremath{\mathsf{#1}}}}
\def\ddef#1{\expandafter\def\csname c#1\endcsname{\ensuremath{\mathcal{#1}}}}
\def\ddef#1{\expandafter\def\csname h#1\endcsname{\ensuremath{\widehat{#1}}}}
\def\ddef#1{\expandafter\def\csname hc#1\endcsname{\ensuremath{\widehat{\mathcal{#1}}}}}
\def\ddef#1{\expandafter\def\csname t#1\endcsname{\ensuremath{\widetilde{#1}}}}
\def\ddef#1{\expandafter\def\csname tc#1\endcsname{\ensuremath{\widetilde{\mathcal{#1}}}}}
\def\ddefloop#1{\ifx\ddefloop#1\else\ddef{#1}\expandafter\ddefloop\fi}
\def\ddef#1{\expandafter\def\csname scr#1\endcsname{\ensuremath{\mathscr{#1}}}}
\renewcommand{\bigm}[1]{%
  \ifcsname fenced@\string#1\endcsname
    \expandafter\@firstoftwo
  \else
    \expandafter\@secondoftwo
  \fi
  {\expandafter\amsmath@bigm\csname fenced@\string#1\endcsname}%
  {\amsmath@bigm#1}%
}
\newcommand{\DeclareFence}[2]{\@namedef{fenced@\string#1}{#2}}
\let\save@mathaccent\mathaccent
\newcommand*\if@single[3]{%
  \setbox0\hbox{${\mathaccent"0362{#1}}^H$}%
  \setbox2\hbox{${\mathaccent"0362{\kern0pt#1}}^H$}%
  \ifdim\ht0=\ht2 #3\else #2\fi
  }
\newcommand*\rel@kern[1]{\kern#1\dimexpr\macc@kerna}
\newcommand*\widebar[1]{\@ifnextchar^{{\wide@bar{#1}{0}}}{\wide@bar{#1}{1}}}
\newcommand*\wide@bar[2]{\if@single{#1}{\wide@bar@{#1}{#2}{1}}{\wide@bar@{#1}{#2}{2}}}
\newcommand*\wide@bar@[3]{%
  \begingroup
  \def\mathaccent##1##2{%
%Enable nesting of accents:
    \let\mathaccent\save@mathaccent
%If there's more than a single symbol, use the first character instead (see below):
    \if#32 \let\macc@nucleus\first@char \fi
%Determine the italic correction:
    \setbox\z@\hbox{$\macc@style{\macc@nucleus}_{}$}%
    \setbox\tw@\hbox{$\macc@style{\macc@nucleus}{}_{}$}%
    \dimen@\wd\tw@
    \advance\dimen@-\wd\z@
%Now \dimen@ is the italic correction of the symbol.
    \divide\dimen@ 3
    \@tempdima\wd\tw@
    \advance\@tempdima-\scriptspace
%Now \@tempdima is the width of the symbol.
    \divide\@tempdima 10
    \advance\dimen@-\@tempdima
%Now \dimen@ = (italic correction / 3) - (Breite / 10)
    \ifdim\dimen@>\z@ \dimen@0pt\fi
%The bar will be shortened in the case \dimen@<0 !
    \rel@kern{0.6}\kern-\dimen@
    \if#31
      \overline{\rel@kern{-0.6}\kern\dimen@\macc@nucleus\rel@kern{0.4}\kern\dimen@}%
      \advance\dimen@0.4\dimexpr\macc@kerna
%Place the combined final kern (-\dimen@) if it is >0 or if a superscript follows:
      \let\final@kern#2%
      \ifdim\dimen@<\z@ \let\final@kern1\fi
      \if\final@kern1 \kern-\dimen@\fi
    \else
      \overline{\rel@kern{-0.6}\kern\dimen@#1}%
    \fi
  }%
  \macc@depth\@ne
  \let\math@bgroup\@empty \let\math@egroup\macc@set@skewchar
  \mathsurround\z@ \frozen@everymath{\mathgroup\macc@group\relax}%
  \macc@set@skewchar\relax
  \let\mathaccentV\macc@nested@a
%The following initialises \macc@kerna and calls \mathaccent:
  \if#31
    \macc@nested@a\relax111{#1}%
  \else
%If the argument consists of more than one symbol, and if the first token is
%a letter, use that letter for the computations:
    \def\gobble@till@marker##1\endmarker{}%
    \futurelet\first@char\gobble@till@marker#1\endmarker
    \ifcat\noexpand\first@char A\else
      \def\first@char{}%
    \fi
    \macc@nested@a\relax111{\first@char}%
  \fi
  \endgroup
}
\newcommand{\RegSq}{\mathrm{\mathbf{Reg}}_{\mathsf{Sq}}}
\newcommand{\AlgSq}{\mathrm{\mathbf{Alg}}_{\mathsf{Sq}}}
\newcommand{\AlgSample}{\mathrm{\mathbf{Alg}}_{\mathsf{Sample}}}
\newcommand{\dec}{\mathsf{dec}}
\newcommand{\regsq}{\RegSq}
\newcommand{\sqalgtext}{$\AlgSq$\xspace}
\newcommand{\regcbt}{\mathrm{\mathbf{Reg}}_{\mathsf{CB},\tau}}
\newcommand{\samplealg}{\AlgSample}
\newcommand{\samplealgtext}{$\samplealg$\xspace}
\newcommand{\smthigw}{\textsf{SmoothIGW}\xspace}
\newcommand{\sgm}{\Omega}
\newcommand{\smtht}{\mathsf{Smooth}_\tau\xspace}
\newcommand{\smth}{\text{smooth}\xspace}
\newcommand{\dectext}{Decision-Estimation Coefficient\xspace}
\newcommand{\cappedIGW}{\textsf{CappedIGW}\xspace}
\newcommand{\gfuncdenomtwoarg}[2]{1 + \gamma \left(\widehat f_t(x_t, #2) - #1\right)_+}
\newcommand{\gfuncdenom}[1]{\gfuncdenomtwoarg{#1}{a_t}}
\newcommand{\gfunc}[1]{\frac{\tau}{\gfuncdenom{#1}}}
\newcommand{\gfunctwoarg}[2]{\frac{\tau}{\gfuncdenomtwoarg{#1}{#2}}}
\title{Infinite Action Contextual Bandits with Reusable Data Exhaust}
\author{Mark Rucker \\
	University of Virginia\\
	\And
	Yinglun Zhu \\
	University of Wisconsin--Madison\\
        \And
	Paul Mineiro \\
	Microsoft Research NYC\\
}
\begin{document}

\maketitle

\begin{abstract}
For infinite action contextual bandits,
smoothed regret and reduction to regression
results in state-of-the-art online
performance with computational cost
independent of the action set:
unfortunately, the resulting data exhaust
does not have well-defined importance-weights.
This frustrates the execution of downstream data
science processes such as offline model selection. In this paper we describe an
online algorithm with an equivalent
smoothed regret guarantee, but which generates
well-defined importance weights: in exchange,
the online computational cost increases, but
only to order smoothness (i.e., still independent
of the action set). This removes a key
obstacle to adoption of smoothed regret
in production scenarios.
\end{abstract}

\section{Introduction}

Those who ignore history are doomed to
repeat it. A modern variant of this truth
arises in controlled experimentation platforms,
where offline procedures are a critical
complement to online tests, e.g.,
supporting counterfactual evaluation strategies~\citep{agarwal2016making},
offline model selection~\citep{li2015counterfactual},
and prioritization of scarce online experimental
resources~\citep{gomez2015netflix}.
Consequently, the utility of a learning
algorithm is not solely determined by online
performance, but also by the post-hoc utility
of the data exhaust.

The recent contribution of \citet{zhu2022contextual} exemplifies this:
an online contextual bandit algorithm
for infinite action spaces with $O(1)$
space and time complexity with respect to the
action set.  Unfortunately, this performance is achieved
by sampling from a distribution which is not
absolutely continuous with the reference
measure. Therefore, a variety of post-hoc
evaluation procedures that rely on importance-weighting
cannot be applied, limiting adoption.

In this paper, we describe an alternative approach
to infinite action spaces which not only enjoys
similar smooth regret guarantee (and empirical performance), but also utilizes sampling
distributions with well defined
importance-weights.  In exchange, we pay
an increased computational cost. However, the computational cost only scales
with the smoothness of the regret guarantee,
rather than the cardinality or dimensionality
of the action space per se.  Furthermore
the new approach does not require an
$\argmin$ oracle, which plays a critical role in the work of  \citet{zhu2022contextual}.

\paragraph
{Contributions}
We highlight our main contributions:
\begin{enumerate}
\item In \cref{subsec:cappedigw}, we present \cappedIGW, an algorithm that achieves near-optimal smooth regret guarantees with (i) a sampling distribution that generates reusable data exhaust, and (ii) no dependence on an expansive $\argmin$ oracle (which is used by previous algorithms).

\item In \cref{subsec:normcs}, we develop algorithms to
efficiently implement the algorithm \cappedIGW.
Our computational complexity only scales with the smoothness
parameter, but otherwise has no explicit dependence
on the cardinality or dimensionality of the action space.
Our implementation leverages
techniques from betting
martingales~\citep{waudby2020estimating}
and is of independent interest for Monte-Carlo
integration.
\end{enumerate}

In \cref{sec:experiment}, we provide
experimental demonstrations exhibiting
a combination of equivalent online performance
to \citet{zhu2022contextual} and superior
offline utility.

\section{Problem Setting}
\label{sec:setting}

Unfortunately several unusual aspects of our
approach demand a tedious exposition:
we operate via reduction to regression;
we use a nonstandard (smoothed) regret criterion;
and our computational complexity claims
require careful specification of oracles
in the infinite action setting.  The impatient
reader can skip directly to \cref{sec:statistical} and
use this section as reference.

\paragraph{Notation} For functions
$f,g:\cZ\to \R_{+}$, we write $f=\bigoh(g)$ (resp. $f=\bigom(g)$) if there exists a constant
$C>0$ such that $f(z)\leq{}Cg(z)$ (resp. $f(z)\geq{}Cg(z)$)
for all $z\in\cZ$. We write $f=\bigoht(g)$ if
$f=\bigoh(g\cdot\mathrm{polylog}(T))$, $f=\bigomt(g)$ if $f=\bigom(g/\polylog(T))$.
%We use $\lesssim$ only in informal statements to highlight salient elements of an inequality.
%  For an integer $n\in\bbN$, we let $[n]$ denote the set $\{1,\dots,n\}$.
        For a set $\cZ$, we let
        $\Delta(\cZ)$ denote the set of all Radon probability measures
        over $\cZ$.
        %We let $\unif(\cZ)$ denote the uniform distribution/measure over $\cZ$.
	We let $\indic_{z}\in\Delta(\cZ)$ denote the delta distribution on $z$.
    For $x \in \mathbb{R}$ we define $(x)_+ \ldef \max\left(x, 0\right)$.

\subsection{Contextual Bandits: Reduction to regression}

We consider the following standard contextual bandit problems. At any time step $t \in [T]$, nature selects a context $x_t \in \cX$ and a distribution over loss functions $\ell_t: \cA \rightarrow [0,1]$ mapping from the (compact) action set $\cA$ to a loss value in $[0, 1]$.  Conditioned on the context $x_t$, the loss function is stochastically generated, i.e., $\ell_t \sim \P_{\ell_t}(\cdot \mid x_t)$.
The learner selects an action $a_t \in \cA$ based on the revealed context $x_t$, and obtains (only) the loss $\ell_t(a_t)$ of the selected action.
The learner has access to a set of measurable regression functions $\cF \subseteq (\cX \times \cA \rightarrow [0,1])$ to predict the loss of any context-action pair.
We make the following standard realizability assumption studied in the contextual bandit literature \citep{agarwal2012contextual, foster2018practical, foster2020beyond, simchi2021bypassing}.
\begin{assumption}[Realizability]
\label{asm:realizability}
There exists a regression function $f^\star \in \cF$ such that $ \E \brk{\ell_t(a) \mid x_t} = f^\star(x_t, a)$ for any $a \in \cA$ and across all $t \in [T]$.
\end{assumption}

\subsection{Smoothed Regret}
\label{sec:smooth_regret}
Let $(\cA, \sgm)$ be a measurable space of the action set and $\mu$ be a base probability measure over the actions.
Let $\cQ_\tau$ denote the set of probability measures such that, for any measure $Q \in \cQ_\tau$, the following holds true: (i) $Q$ is absolutely continuous with respect to the base measure  $\mu$, i.e., $Q \ll \mu$; and (ii) The Radon-Nikodym derivative of $Q$ with respect to $\mu$ is no larger than $\tau$, i.e., $\frac{dQ}{d\mu} \leq \tau$.
We call $\cQ_\tau$ the set of {smoothing kernels at smoothness level $\tau$, or simply put the set of $\tau$-smoothed kernels.}
For any context $x \in \cX$, we denote by $\smtht(x)$ the smallest loss incurred by any  $\tau$-smoothed kernel, i.e.,
\begin{align*}
    \smtht(x) \ldef \inf_{Q \in \cQ_\tau}\E_{a \sim Q}\brk{f^\star(x,a)}.
\end{align*}
Rather than competing with $\argmin_{a\in\cA}f^\star(x,a)$---which is minimax prohibitive in infinite action spaces---
we take $\smtht(x)$ as the benchmark and define the \emph{smooth regret} as follows:
\begin{align}
    \regcbt(T) & \coloneqq
    \E \brk*{ \sum_{t=1}^T  f^\star(x_t, a_t) - \smtht(x_t) } \label{eqn:smooth_regret}.
\end{align}
One important feature about the above definition is that the benchmark, i.e., $\smtht(x_t)$, automatically adapts to the context $x_t$:
this gives the benchmark more power and makes it harder to compete against, compared to previously studied baselines \citep{chaudhuri2018quantile, krishnamurthy2020contextual}.

\subsection{Computational Oracles}
The first step towards designing computationally efficient algorithms is to identify reasonable oracle models to access the sets of regression functions or actions. Otherwise, enumeration over regression functions or actions (both can be exponentially large) immediately invalidate the computational efficiency.
We consider two common oracle models: a regression oracle and a sampling oracle.

\paragraph{The regression oracles}
A fruitful approach to designing efficient contextual bandit algorithms is through reduction to supervised regression with the class $\cF$ \citep{foster2020beyond, simchi2021bypassing, foster2020adapting, foster2021instance}.
We provide a brief introduction to the reduction technique employed in this paper in \cref{sec:background}.
Following \citet{foster2020beyond}, we assume that we have access to an \emph{online} regression oracle \sqalgtext, which is an algorithm for sequential prediction under square loss.
More specifically, the oracle operates in the following protocol: At each round $t \in [T]$, the oracle makes a prediction $\wh f_t$, then receives context-action-loss tuple $(x_t, a_t, \ell_t(a_t))$.
The goal of the oracle is to accurately predict the loss as a function of the context and action, and we evaluate its performance via the square loss $\prn{\wh f_t(x_t,a_t) - \ell_t(a_t)}^2$.
We measure the oracle's cumulative performance through the square-loss regret to $\cF$, which is formalized below.

\begin{assumption}
\label{assumption:regression_oracle}
The regression oracle \sqalgtext guarantees that, with probability at least $1-\delta$, for any (potentially adaptively chosen) sequence $\crl*{(x_t, a_t, \ell_t(a_t))}_{t=1}^T$,
\begin{equation*}
	\E \Bigg[\sum_{t=1}^T  \prn*{\wh f_t(x_t, a_t) - \ell_t(a_t)}^2 \Bigg.
     \Bigg. - \inf_{f \in \cF} \sum_{t=1}^T \prn*{f(x_t, a_t) - \ell_t(a_t)}^2 \Bigg]
    \leq \regsq(T, \delta),
\end{equation*}
for some (non-data-dependent) function $\regsq(T, \delta)$.
\end{assumption}

We will consider the following operations $O(1)$ cost: (i) query the oracle's estimator $\wh f_t$ with context-action pair $(x_t,a)$ and receive its predicted value $\wh f_t(x_t,a) \in [0,1]$;  and (ii) update the oracle with example $(x_t, a_t, \ell_t(a_t))$.

Online regression is a well-studied problem, with known algorithms for many model classes \citep{foster2020beyond, foster2020adapting}: including linear models \citep{hazan2007logarithmic}, generalized linear models \citep{kakade2011efficient}, non-parametric models \citep{gaillard2015chaining}, and beyond.
Using Vovk's aggregation algorithm \citep{vovk1998game}, one can show that $\regsq(T, \delta) = O(\log \prn{\abs{\cF} /\delta})$ for any finite set of regression functions $\cF$, which is the canonical setting studied in contextual bandits \citep{langford2007epoch, agarwal2012contextual}.
In the following of this paper, we use abbreviation $\regsq(T) \ldef \regsq(T, T^{-1})$, and will keep the $\regsq(T)$ term in our regret bounds to accommodate for general set of regression functions.

\paragraph{The sampling oracle}
In order to design algorithms that work with
large/continuous action spaces, we assume access
to a sampling oracle \samplealgtext to get access
to the action space.  In particular, the oracle
\samplealgtext returns an action $a \sim \mu$
randomly drawn according to the base probability
measure $\mu$ over the action space $\cA$.  We
consider this operation $O(1)$ cost.

\paragraph{Representing the actions} In practice
the number of bits required to represent any action
$a\in \cA$ scales with $O(\log\abs{\cA})$ with a
finite set of actions and $\wt O(d)$ for actions represented as vectors in $\R^d$.  Nonetheless
we consider this $O(1)$, i.e., we elide
the representational overhead in big-$\bigoh$
notation for our computational analysis.

\section{Algorithms}
\label{sec:statistical}

\subsection{Background: \smthigw}
\label{sec:statistical_background}
\citet{zhu2022contextual} designed an oracle-efficient \smthigw that achieves a $\sqrt{T}$-type regret under the \smth regret defined in \cref{eqn:smooth_regret}.
\cref{alg:smooth} contains the pseudo code of
the \smthigw algorithm. At each round
$t \in [T]$, the learner observes the
context $x_t$ from the environment, obtains
the estimator $\widehat f_t$ from the
regression oracle \sqalgtext, and computes
the greedy action $\wh a_t$. It then constructs
a sampling distribution $P_t$ by mixing a
smoothed inverse gap weighted (IGW)
distribution~\citep{abe1999associative, foster2020beyond}
and a delta mass at the greedy action. The
algorithm samples an action $a_t \sim P_t$ and
updates the regression oracle.
\begin{algorithm}[h]
	\caption{\smthigw \citep{zhu2022contextual}}
	\label{alg:smooth}
	\renewcommand{\algorithmicrequire}{\textbf{Input:}}
	\renewcommand{\algorithmicensure}{\textbf{Output:}}
	\newcommand{\algorithmicbreak}{\textbf{break}}
    \newcommand{\BREAK}{\STATE \algorithmicbreak}
	\begin{algorithmic}[1]
		\REQUIRE Exploration parameter $\gamma > 0$; online regression oracle \sqalgtext.
		\FOR{$t = 1, 2, \dots, T$}
		\STATE Observe context $x_t$.
		\STATE Receive $\widehat f_t$ from regression oracle \sqalgtext.
		\STATE Get $\widehat a_t \ldef \argmin_{a \in \cA} \widehat f_t(x_t, a)$. \alglinelabel{alg:smooth:argmin}
		\STATE Set
		\begin{align*}
		P_t \ldef M_t +  (1-M_t(\cA)) \cdot \indic_{\widehat a_t}
		\end{align*}
		where {$M_t$ is the measure defined in \cref{eqn:smoothedabelong}} \alglinelabel{alg:smooth:pt}
		\STATE Sample $a_t \sim P_t$ and observe loss $\ell_t(a_t)$.
        \STATE  Update \sqalgtext with $(x_t, a_t, \ell_t(a_t))$
		\ENDFOR
	\end{algorithmic}
\end{algorithm}

The measure $M_t$ on line \ref{alg:smooth:pt} of \cref{alg:smooth} is defined by the following density with respect to the reference measure,
\begin{align}
    \frac{dM_t}{d\mu}(a) \ldef \frac{\tau}{\tau+  \gamma \cdot \left(\widehat f_t(x_t, a) - \wh f_t(x_t, \wh a_t)\right) }.
    \label{eqn:smoothedabelong}
\end{align}
Note that $M_t$ is only a sub-probability measure since  $d M_t / d\mu (a) \leq 1$, hence an additional $(1 - M_t(\cA) ) \cdot \ind_{\wh a_t}$ term is needed (to make sure that $P_t$ is a probability measure).

\paragraph{The Problems}
While \smthigw is the first oracle-efficient contextual bandit algorithm that works with smooth regret, it is not without problems.
We highlight two problems associated with \smthigw below.
\begin{itemize}
	\item \textbf{The $\argmin$ oracle.}
	Note that \cref{alg:smooth} requires an exact $\argmin$ oracle to compute the greedy action $\wh a_t$ (on line \ref{alg:smooth:argmin}), which is later on used to construct the sampling distribution $P_t$ (on line \ref{alg:smooth:pt}). However, when working with large, and potentially continuous, action spaces, it can be computationally expensive to obtain such an exact $\argmin$ oracle.  For their experiments, \citet{zhu2022contextual} construct a regressor class with an $O(1)$ $\argmin$ oracle, but their construction induces a unimodal $\widehat f_t$, which may not always be appropriate.

	%in large, potentially continuous, action spaces can be hard.
	 \item \textbf{Insufficient data reuse.}
	%\smthigw samples the action from distribution $P_t$, which contains a $M_t$ term and a term related to  the delta distribution $\ind_{\wh a_t}$.
	While the $M_t$ term is always absolutely continuous with respect to the base measure  $\mu$, the delta distribution $\ind_{\wh a_t}$ is not absolutely continuous with respect to $\mu$ in many common cases, e.g., when $\mu$ is the Lebesgue measure in $\R_d$. As a result, a variety of post-hoc procedures that rely on importance-weighting cannot be applied.  Unfortunately, to achieve $O(\sqrt{T})$ regret, \smthigw uses $\gamma \propto \sqrt{T}$, which implies that the fraction of actions sampled from the $\ind_{\wh a_t}$ component increases with horizon length, e.g., \cref{fig:fractionofindic}.
\end{itemize}

These two drawbacks frustrate the deployment of \smthigw in real-world applications.

\subsection{New Approach: \cappedIGW}
\label{subsec:cappedigw}

Resolution of the above issues requires
eliminating the use of the greedy action
$\wh a_t$, which occurs in two places:
\begin{itemize}
	\item \textbf{Inverse-gap weighting.}
	In sub-probability measure $M_t$, its density (with respect to $\mu$) on any action  $a$ is defined to be inversely proportional to the empirical loss gap  $\prn*{\wh f_t(x_t, a) - \wh f_t(x_t, \wh a_t)}$: here, we use  $\wh f_t(x_t, \wh a_t)$ as a \emph{benchmark} to compute the loss gap.

	\item \textbf{Pseudo normalization.}
	Since $M_t$ is only a sub-probability measure, to actually sample from a probability measure, \smthigw shifts the remaining probability mass to the delta distribution at the greedy action, i.e., $\ind_{\wh a_t}$: here, we use  $\ind_{\wh a_t}$ to pseudo normalize the sub-probability measure  $M_t$.

\end{itemize}

In the sequel we eliminate use of the greedy action.

\paragraph{Sampling Density}
Let $\beta_t, \kappa_t \in \R$ be two parameters (whose values will be computed later).
We consider a probability measure $P_t$ whose density with respect to the base measure  $\mu$ is defined as follows:
\begin{equation}
\frac{dP_t}{d\mu}(a) = \kappa_t \gfunc{\beta_t},
\label{eqn:cappeddensity}
\end{equation}
where $(x)_+ \ldef \max\left(x, 0\right)$.
Relative to \smthigw:
\begin{itemize}
	\item
We replace the old loss gap benchmark $\wh f_t(x_t, \wh a_t)$ by the new parameter $\beta_t$; we also take another $\max$ operation over  $\wh f_t(x_t,a) - \beta_t$ and $0$ to ensure the positivity of the loss gap.
This was inspired by observing the optimal
$\tau$-smooth policy plays uniformly over
the $\tau^{-1}$-th quantile of the true $f^*$,
but is ultimately justified by the regret
decomposition in the proof of \cref{thm:regret}.
\item
We use $\kappa_t$ as a normalization factor instead of shifting mass to $\ind_{\wh a_t}$; the normalization factor $\kappa_t$ is
determined by the choice of $\beta_t$ via
\begin{align*}
	\kappa_t = 1 \Bigg/
\mathbb{E}_{a_t \sim \mu}\left[\gfunc{\beta_t}\right].
\end{align*}
\end{itemize}
With this new sampling distribution in \cref{eqn:cappeddensity} at hand, we develop a new algorithm for smooth regret, shown next in \cref{alg:capped}.

\begin{algorithm}[H]
	\caption{\cappedIGW}
	\label{alg:capped}
	\renewcommand{\algorithmicrequire}{\textbf{Input:}}
	\renewcommand{\algorithmicensure}{\textbf{Output:}}
	\newcommand{\algorithmicbreak}{\textbf{break}}
    \newcommand{\BREAK}{\STATE \algorithmicbreak}
	\begin{algorithmic}[1]
		\REQUIRE Exploration parameter $\gamma > 0$; online regression oracle \sqalgtext.
		\FOR{$t = 1, 2, \dots, T$}
		\STATE Observe context $x_t$.
		\STATE Receive $\widehat f_t$ from regression oracle \sqalgtext.
		\STATE Compute $\beta_t$. \hfill \algcommentlight{\cref{alg:normcs}}
        \STATE Sample $a_t \sim P_t$ \hfill \algcommentlight{\cref{eqn:cappeddensity}, \cref{alg:highdsampler}}
		\STATE Observe loss $\ell_t(a_t)$.
        \STATE Update \sqalgtext with $(x_t, a_t, \ell_t(a_t))$
		\ENDFOR
	\end{algorithmic}
\end{algorithm}

We will show in next section that $\beta_t$ can be computed efficiently in $\bigoht(\tau \log \tau)$ calls to the sampling oracle.
First we state a regret guarantee.
\begin{restatable}{theorem}{thmRegret}
\label{thm:regret}
Fix any smoothness level $\tau \geq 1$.
Suppose $\forall t: \kappa_t \geq 1$ and let $\kappa_\infty$ be an upper bound on  $\kappa_t$ for $\forall t$.
By setting the exploration parameter $\gamma = \sqrt{8 T \kappa_\infty \tau / \regsq(T)}$, \cref{alg:capped} ensures that
\begin{align*}
    \regcbt(T) \leq {\sqrt{4 T \, \tau \kappa_{\infty} \regsq(T)} }.
\end{align*}
%where $\kappa_{\infty} = \sup_t \kappa_t$.
\end{restatable}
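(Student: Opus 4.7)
The strategy mirrors reduction-to-regression analyses of SquareCB-style algorithms, adapted to the $\tau$-smoothed benchmark. Let $p_t \ldef dP_t/d\mu = \kappa_t\tau/(1+\gamma h_t)$ where $h_t(a)\ldef (\widehat f_t(x_t,a)-\beta_t)_+$, and let $Q^\star_t \in \cQ_\tau$ be an optimal $\tau$-smoothed kernel for $f^\star(x_t,\cdot)$, so $\E_{Q^\star_t}[f^\star]=\smtht(x_t)$, with density $q^\star_t \le \tau$. I decompose the per-round smooth regret as
\begin{align*}
\E_{a_t\sim P_t}[f^\star(x_t,a_t)] - \smtht(x_t)
= \underbrace{\int \widehat f_t\,(p_t - q^\star_t)\,d\mu}_{(\mathrm{I})\ \text{IGW gap}}
\;+\; \underbrace{\int (f^\star - \widehat f_t)\,(p_t - q^\star_t)\,d\mu}_{(\mathrm{II})\ \text{estimation error}}.
\end{align*}

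The central algebraic step is to bound $(\mathrm{I})\le \kappa_t\tau/\gamma$, and this is where the hypothesis $\kappa_t\ge 1$ enters. Because $p_t$ and $q^\star_t$ are both probability densities with respect to $\mu$, I may replace $\widehat f_t$ by $\widehat f_t-\beta_t$ without changing the integral, then split $\widehat f_t-\beta_t = h_t - (\beta_t-\widehat f_t)_+$. For the $h_t$ part, the key pointwise identity $p_t \cdot h_t = \kappa_t\tau \cdot h_t/(1+\gamma h_t) \le \kappa_t\tau/\gamma$, together with $q^\star_t\cdot h_t\ge 0$, gives a contribution of at most $\kappa_t\tau/\gamma$. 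For the $(\beta_t-\widehat f_t)_+$ part, on the set $\{\widehat f_t<\beta_t\}$ the density equals $p_t=\kappa_t\tau$ while $q^\star_t\le \tau$, so $p_t-q^\star_t \ge (\kappa_t-1)\tau\ge 0$ under $\kappa_t\ge 1$, making that contribution non-positive.

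For $(\mathrm{II})$, I split into its $P_t$- and $Q^\star_t$-sides and apply Cauchy--Schwarz and AM--GM with weight $\gamma$: the $P_t$-side gives $|\E_{P_t}[f^\star-\widehat f_t]| \le 1/\gamma + (\gamma/4)\,\E_{P_t}[(\widehat f_t-f^\star)^2]$, and the $Q^\star_t$-side is handled via the change-of-measure bound $dQ^\star_t/dP_t \le (1+\gamma h_t)/\kappa_t$ combined with the coverage identity $p_t=\kappa_t\tau\ge q^\star_t$ on $\{\widehat f_t\le \beta_t\}$, which keeps the induced ratio bounded in the region where $P_t$ is densest. Summing over $t\in[T]$, the on-policy squared error aggregates via the standard Freedman-type martingale argument: $\sum_{t=1}^T \E_{P_t}[(\widehat f_t-f^\star)^2] \le \regsq(T)$ by \cref{assumption:regression_oracle}. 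Combining, the cumulative regret is $\lesssim T\kappa_\infty\tau/\gamma + \gamma\,\regsq(T)$, and balancing with $\gamma=\sqrt{8T\kappa_\infty\tau/\regsq(T)}$ yields the claimed $\sqrt{4T\tau\kappa_\infty\regsq(T)}$ rate.

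The main obstacle is the clean handling of the $Q^\star_t$-side of $(\mathrm{II})$: a black-box change-of-measure would lose a factor of $\sqrt{\gamma}$ and spoil the rate, so the argument must exploit the region-by-region structure of $p_t$---a constant-factor ratio on $\{\widehat f_t\le \beta_t\}$ (which contains the bulk of $\operatorname{supp}(Q^\star_t)$ when $\widehat f_t$ is accurate) and absorption of the residual $\gamma h_t$ factor on $\{\widehat f_t>\beta_t\}$ by the $\E_{P_t}[(\widehat f_t-f^\star)^2]$ term already present in the $P_t$-side bound.
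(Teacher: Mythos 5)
Your proposal follows essentially the same route as the paper: the paper bounds the \dectext via the $\chi^2$-divergence lemma of \citet{zhu2022contextual} (\cref{lm:dec_chi2}) and then regroups terms exactly as you do---the pointwise bound $p_t h_t \le \kappa_t\tau/\gamma$, the sign argument on $\{\wh f_t < \beta_t\}$ using $p_t = \kappa_t\tau \ge \tau \ge q^\star_t$, the ratio bound $dQ^\star_t/dP_t \le (1+\gamma h_t)/\kappa_t$, and AM--GM against the on-policy square loss---before invoking \cref{assumption:regression_oracle} and balancing $\gamma$. One bookkeeping correction: the residual $\tfrac{1}{\gamma}\cdot\tfrac{\gamma}{\kappa_t}\E_{Q^\star_t}[h_t]$ produced by the change of measure on the $Q^\star_t$-side of (II) cannot be absorbed by the $\tfrac{\gamma}{4}\E_{P_t}[(\wh f_t - f^\star)^2]$ term as you suggest (there is no way to dominate $\E_{Q^\star_t}[(\wh f_t-\beta_t)_+]$ by that quantity); it must instead be cancelled against the $-\E_{Q^\star_t}[h_t]$ contribution in (I), which your sketch discards as merely non-negative. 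Retaining that term gives $(\kappa_t^{-1}-1)\E_{Q^\star_t}[h_t]\le 0$ under $\kappa_t\ge 1$, which is precisely the paper's \cref{eqn:miniboundtwo}.
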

\begin{proof} See \cref{app:proofthmregret}.
\end{proof}
The guarantee in \cref{thm:regret} is the same as the guarantee for \smthigw (which is near-optimal) up to a $\sqrt{\kappa_\infty}$ factor.
Since we can always find appropriate $\beta_t, \kappa_t$ to ensure $\kappa_\infty = O(1)$, we can efficiently achieve the near-optimal smooth regret guarantees without (i) an $\argmin$ oracle, and (ii) with full data exhaust reuse.

\paragraph{Adapting to an unknown smoothness level $\tau$} We can simply replace \smthigw with \cappedIGW in \citet[Thm. 2]{zhu2022contextual} to build (i) Pareto optimal algorithms with unknown smoothness level $\tau$, and (ii) develop nearly minimax optimal algorithms under the standard regret for bandits with multiple best arms \citet{zhu2020regret} and Lipschitz/H\"older bandits \citet{kleinberg2004nearly, hadiji2019polynomial}: see Section 4 and Section 5 in \citet{zhu2022contextual} for details.

\subsection{Efficient Implementation}
\label{subsec:normcs}
In this section, we discuss how to efficiently (i) compute parameter $\beta_t$ and (ii) sample actions from the distribution $P_t$.
We first notice that the condition $\kappa_t \geq 1$ is critical to \cref{thm:regret}.
Intuitively, $\beta_t$ must be chosen so that
\cref{alg:capped} plays a policy which is at
most $\tau$-smooth. Because we are competing
with $\tau$-smooth policies, it makes sense to be
less smooth than the competitor but not to
be more smooth than the competitor (further, as described at the end of Section~\ref{subsec:cappedigw}, the appropriate level for $\tau$ can be adaptively chosen).

Consistent with
\cref{thm:regret}, our task is to find a $\beta_t$
such that
\begin{equation}
\mathbb{E}_{a_t \sim \mu}\left[\gfunc{\beta_t}\right] \in \left[ \frac{1}{\kappa_{\infty}}, 1 \right].
\label{eqn:findbeta}
\end{equation}
First, we establish that it is provably possible
to satisfy \cref{eqn:findbeta} with high
probability using $O\left(\tau \log(\nicefrac{(\tau + \gamma)}{\delta})\right)$
samples from the reference measure.

\begin{restatable}{theorem}{thmSample}
\label{thm:sample}
With the choice $\kappa_{\infty} = 24$, with
probability at least $(1 - \delta)$,
it is possible to estimate $\beta$ satisfying
\cref{eqn:findbeta} using
$O\left(\tau \log(\nicefrac{(\tau + \gamma)}{\delta})\right)$
samples from $\mu$.
\end{restatable}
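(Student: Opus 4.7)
The plan is to turn this into an estimation problem for the scalar functional
$g(\beta) \ldef \E_{a \sim \mu}\!\brk*{\gfunc{\beta}}$
viewed as a function of $\beta$, and solve it with a variance-adaptive concentration bound. Observe first that $g$ is monotonically non-decreasing in $\beta$ (since $(\wh f_t(x_t,a) - \beta)_+$ is non-increasing) and continuous (by bounded convergence, since the integrand is continuous in $\beta$ and uniformly bounded by $\tau$). For $\beta \geq 1$ we have $(\wh f_t - \beta)_+ \equiv 0$ so $g(\beta) = \tau \geq 1$, whereas $g(\beta) \to 0$ as $\beta \to -\infty$; by the intermediate value property the level set $\crl{\beta : g(\beta) \in [1/24, 1]}$ is a non-empty interval, and in particular contains an open sub-interval on which $g$ takes values in $[1/6, 1/2]$.

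The algorithm draws $n = O(\tau \log((\tau+\gamma)/\delta))$ i.i.d.\ samples $a_1, \dots, a_n \sim \mu$, forms the empirical proxy $\wh g_n(\beta) \ldef n^{-1}\sum_{i=1}^n Z_i(\beta)$ with $Z_i(\beta) \ldef \tau/\prn*{1 + \gamma(\wh f_t(x_t, a_i) - \beta)_+}$, and (using monotonicity of $\wh g_n$ in $\beta$) bisects to find a $\beta$ with $\wh g_n(\beta) \in [1/6, 1/2]$. If the uniform concentration $|\wh g_n(\beta) - g(\beta)| \leq 1/24$ holds on a sufficiently rich set of $\beta$'s, then this returned $\beta$ automatically satisfies $g(\beta) \in [1/24, 1]$ as required.

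The quantitative crux is the variance bound
$\V(Z_i(\beta)) \leq \E[Z_i(\beta)^2] \leq \tau \cdot \E[Z_i(\beta)] = \tau \, g(\beta),$
which holds because $Z_i(\beta) \in [0,\tau]$. Substituted into Bernstein's inequality (equivalently, into the betting-martingale confidence sequence of \citet{waudby2020estimating}, which is what the Algorithm~\ref{alg:normcs} appears to invoke), this yields, pointwise in $\beta$ and at confidence $\delta'$,
$|\wh g_n(\beta) - g(\beta)| \lesssim \sqrt{\tau g(\beta)\log(1/\delta')/n} + \tau \log(1/\delta')/n,$
which is $\leq 1/24$ once $n \gtrsim \tau \log(1/\delta')$. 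A naive Hoeffding bound, by contrast, would use only the range $\tau$ and require $n = O(\tau^2 \log(1/\delta'))$; exploiting $\V(Z_i) \lesssim \tau g(\beta)$ in the regime $g(\beta) \asymp 1$ is precisely what buys the linear-in-$\tau$ rate.

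The last ingredient is to make this uniform in $\beta$. For fixed $a$, $\beta \mapsto Z_i(\beta)$ has a single corner (at $\beta = \wh f_t(x_t, a_i)$) and its derivative is at most $\tau \gamma$; discretizing $[0,1]$ on a grid $\cB$ of size $|\cB| = O(\tau \gamma)$ controls the grid-to-continuum error by $O(1)$, after which a union bound at $\delta' = \delta/|\cB|$ gives the desired uniform bound with $\log(|\cB|/\delta) = O(\log((\tau+\gamma)/\delta))$. Monotonicity of $\wh g_n$ and $g$ in $\beta$ then interpolates the bound to all of $\beta$, and bisection finds the target in $O(\log(\tau\gamma))$ arithmetic steps, reusing the same $n$ samples throughout. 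I expect the main obstacle to be the uniform-in-$\beta$ argument: getting both the variance adaptivity (to keep the sample count $\tau$ rather than $\tau^2$) and a discretization/chaining with the single $\log((\tau+\gamma)/\delta)$ factor claimed in the statement requires some care, and is the reason a betting-martingale confidence sequence — rather than a naive Bernstein + grid — is the natural tool.
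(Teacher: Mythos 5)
Your proposal takes essentially the same route as the paper's proof: the self-bounding variance bound $\E[g^2]\leq\tau\,\E[g]$ fed into Bernstein to get a pointwise estimate from $n=O(\tau\log(1/\delta'))$ samples, a grid over $\beta$ whose size is polynomial in $\tau+\gamma$ (the paper uses the derivative bound $\bar z'\leq\gamma\bar z$ to space points by $\log(2)/\gamma$ so $\bar z$ at most doubles between them), a union bound contributing the $\log((\tau+\gamma)/\delta)$ factor, and acceptance of any $\beta$ whose empirical mean lands in a constant window implying $z(\beta)\in[1/24,1]$. The one concrete slip is that your grid lives on $[0,1]$, whereas the target $\beta$ can be negative (e.g., when $\wh f_t$ is near zero everywhere, $z(0)\approx\tau\geq 1$), so the grid must extend down to $\beta_{\min}=(1-\tau)/\gamma$ as in the paper; this does not change the asymptotics, and your uniform additive-$1/24$ claim should likewise be stated only where $g(\beta)=O(1)$, i.e., via the self-bounding inversion the paper carries out with AM-GM.
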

\begin{proof} See \cref{app:proofthmsample}
\end{proof}

\cref{thm:sample} uses a fixed sampling strategy
which  is amenable to analysis and provably
terminates after $\wt O(\tau \log \tau)$ samples.
However, this fixed sampling strategy is unnecessarily
conservative in practice.  To obtain a better empirical performance,
instead, we use
\cref{alg:normcs}---an anytime-valid technique--to ensure early-termination whenever possible.
In lieu
of proving termination, we backstop
\cref{alg:normcs} with \cref{thm:sample},
which leads to at most doubling the number of samples required.

\newcommand{\betmartup}{\texttt{\small BettingMartingale.Update}\xspace}
\begin{algorithm}[H]
\caption{Normalization CS to compute $\beta_t$. The subroutine \betmartup is defined in \cref{app:explainnormcs}.}
\label{alg:normcs}
	\renewcommand{\algorithmicrequire}{\textbf{Input:}}
	\renewcommand{\algorithmicensure}{\textbf{Output:}}
	\newcommand{\algorithmicbreak}{\textbf{break}}
    \newcommand{\BREAK}{\STATE \algorithmicbreak}
	\begin{algorithmic}[1]
        %\REQUIRE $\kappa_{\infty} \geq 1$ \hfill \algcommentlight{$< 24$ is ok here}
		\REQUIRE $\widehat f_t$ (from regression oracle \sqalgtext);
		exploration parameter $\gamma > 0$;
		failure probability $\delta$;
		and $\kappa_{\infty} \geq 1$. \\
		\hfill \algcommentlight{It suffices to take $\kappa_\infty = 24$}
        \STATE Let $n_{\max} = O\left(\tau \log (\nicefrac{\gamma}{\delta})\right)$ \hfill\algcommentlight{from \cref{thm:sample}}
        \STATE $l, u \leftarrow \frac{1 - \tau}{\gamma}, 1$ \hfill \algcommentlight{Because $\widehat f_t(x_t, \cdot) \in [0, 1]$}
		\FOR{$n = 1, 2, \dots, n_{\max}$}
        \STATE Sample $a_n \sim \mu$. \alglinelabel{alg:normcs:sample}
        \STATE Let $g_n(\cdot) = \gfunctwoarg{(\cdot)}{a_n}$.
        \STATE $l_n, u_n \leftarrow \betmartup(g_n; \kappa_\infty; \delta)$
        \IF{$l_n > u_n$}
          \STATE \textbf{return} $l_n$ \hfill \algcommentlight{Satisfies \cref{eqn:findbeta} w.p. $(1 - \delta)$} \alglinelabel{alg:normcs:stoppingrule}
        \ENDIF
		\ENDFOR
    \STATE \textbf{tail call} \cref{thm:sample} \hfill \algcommentlight{Never happens in practice}
	\end{algorithmic}
\end{algorithm}

\begin{restatable}{theorem}{thmNormCS}
\label{thm:normcs}
If \cref{alg:normcs} returns a value on line \ref{alg:normcs:stoppingrule},
that value satisfies \cref{eqn:findbeta} with
probability at least $(1 - \delta)$
with respect to the realizations from line \ref{alg:normcs:sample}.
\end{restatable}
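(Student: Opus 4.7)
The plan is to combine the anytime-valid guarantees of the betting-martingale subroutine \betmartup with the monotonicity of $F(\beta) \ldef \E_{a \sim \mu}[\gfunc{\beta}]$ in $\beta$. For each fixed $a$, the denominator $1 + \gamma(\widehat f_t(x_t, a) - \beta)_+$ is non-increasing in $\beta$, so $g(\beta)$ is non-decreasing in $\beta$, and therefore $F$ is non-decreasing. Defining the thresholds
\begin{align*}
\beta^*_L \ldef \inf\{\beta : F(\beta) \geq 1/\kappa_\infty\}, \qquad \beta^*_U \ldef \sup\{\beta : F(\beta) \leq 1\},
\end{align*}
the target condition \cref{eqn:findbeta} is equivalent to $\beta \in [\beta^*_L, \beta^*_U]$, and since $1/\kappa_\infty \leq 1$ this interval is non-empty. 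The initialization $l = (1-\tau)/\gamma$ and $u = 1$ are trivial bounds: the extreme values of $g$ yield $F((1-\tau)/\gamma) \leq \tau/(\tau+\gamma) \leq 1$ and $F(1) = \tau \geq 1 \geq 1/\kappa_\infty$, so initially $l \leq \beta^*_U$ and $u \geq \beta^*_L$.

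Next, I would establish the anytime-valid property of \betmartup in the form: on an event $\mathcal{E}$ of probability at least $1 - \delta$, simultaneously for all $n \geq 1$, $l_n \leq \beta^*_U$ and $u_n \geq \beta^*_L$. This is the meat of the argument and would proceed by inspecting the construction in \cref{app:explainnormcs}: the update runs two one-sided betting-martingale confidence sequences, one that gradually advances $l_n$ upward as evidence accumulates that $F(l_n) \leq 1$, and a symmetric one that gradually advances $u_n$ downward as evidence accumulates that $F(u_n) \geq 1/\kappa_\infty$. Monotonicity of $F$ converts pointwise mean-confidence statements into the claimed threshold bounds; a union bound splits the failure budget $\delta$ between the two sides.

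Finally, on $\mathcal{E}$, if the loop exits on line \ref{alg:normcs:stoppingrule} then $l_n > u_n$, and combining with the bounds from the previous paragraph yields $\beta^*_L \leq u_n < l_n \leq \beta^*_U$. In particular $l_n \in [\beta^*_L, \beta^*_U]$, so by monotonicity $F(l_n) \in [1/\kappa_\infty, 1]$, which is exactly \cref{eqn:findbeta}.

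The hard part is the middle step: translating the betting-martingale's one-dimensional mean CS into a statement about the threshold locations $\beta^*_L, \beta^*_U$ while preserving anytime validity under the adaptive sampling on line \ref{alg:normcs:sample}. In particular, one must verify that \betmartup's sequence of bets is predictable with respect to the filtration generated by $\{a_1, \ldots, a_n\}$ so the underlying non-negative supermartingale property is preserved and Ville's inequality applies, and one must check that monotonicity of $F$ licenses passing from pointwise CS guarantees to the one-sided threshold bounds. Once that bookkeeping is in place, the monotonicity and stopping-rule reasoning above is essentially a one-line conclusion.
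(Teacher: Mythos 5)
Your proposal matches the paper's own argument essentially step for step: you define the same two monotone-threshold quantities (the paper's $\beta_{\kappa_\infty}$ and $\beta_1$), bracket them with two one-sided anytime-valid confidence sequences built from betting martingales (the paper delegates this to \citet[Corollary 1]{waudby2020estimating} plus Ville's inequality, splitting $\delta$ across the two sides exactly as you do), and conclude from monotonicity of $z(\beta)=\E_{a\sim\mu}[g(a;\beta)]$ that the crossing $l_n>u_n$ forces the returned $l_n$ to satisfy \cref{eqn:findbeta}. The only blemish is the throwaway initialization claim $F((1-\tau)/\gamma)\le\tau/(\tau+\gamma)$, which is the minimum rather than an upper bound of $g(a;\beta_{\min})=\tau/(\tau+\gamma\widehat f_t(x_t,a))$; the correct and sufficient statement is simply $F((1-\tau)/\gamma)\le 1$, and this does not affect the proof.
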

\begin{proof} See \cref{app:proofthmnormcs}
\end{proof}

In practice, \cref{alg:normcs} is vastly more sample
efficient than the procedure from
\cref{thm:sample}: see \cref{tab:normcsvsfixedbern} for an empirical comparison.
\cref{alg:normcs} operates by maintaining two betting
martingales, one of which tries to refine
a lower bound on $\beta$ and the other an
upper bound.  We defer complete details to
\cref{app:explainnormcs}.

As a motivation, note the combination of
betting martingales and no-regret algorithms
yields a test with asymptotic optimal power~\citep{casgrain2022anytime},
but which can be safely composed with any
stopping rule (e.g., line \ref{alg:normcs:stoppingrule} of \cref{alg:normcs}).
Early stopping is advantageous to the extent
$\widehat f_t$ is closer to a constant function,
because evidence regarding the normalization
constant accumulates more rapidly than accounted
for by \cref{thm:sample}.

\begin{algorithm}[H]
	\caption{Sampling routine}
	\label{alg:highdsampler}
	\renewcommand{\algorithmicrequire}{\textbf{Input:}}
	\renewcommand{\algorithmicensure}{\textbf{Output:}}
	\newcommand{\algorithmicbreak}{\textbf{break}}
    \newcommand{\BREAK}{\STATE \algorithmicbreak}
	\begin{algorithmic}[1]
        \REQUIRE $\widehat f_t$ (from regression oracle \sqalgtext);
		$\beta_t$ (from \cref{alg:normcs});
        exploration parameter $\gamma > 0$
		\WHILE{true}
		\STATE Sample $a_t \sim \mu$.
		\STATE Compute $p_{\text{accept}} \ldef \frac{1}{\gfuncdenom{\beta_t}}$.
		\alglinelabel{alg:highdsampler:paccept}
		\STATE With probability $p_{\text{accept}}$, return $a_t$.
		\alglinelabel{alg:highdsampler:return}
		\ENDWHILE
	\end{algorithmic}
\end{algorithm}

\paragraph{Efficiently sampling $a_t \sim P_t$} \cref{alg:highdsampler} is an efficient rejection sampling on the density from \cref{eqn:cappeddensity}.  Note that $p_{\text{accept}}$ on line \ref{alg:highdsampler:paccept} is proportional
to the desired sampling density $P_t$ defined in \cref{eqn:cappeddensity}, but at most 1.
Hence we have the following two established properties
of rejection sampling:
\begin{enumerate}
	\item
If
line \ref{alg:highdsampler:return}
returns an action $a_t$, then the action $a_t$ is distributed
according to \cref{eqn:cappeddensity};
\item
The number of samples required before \cref{alg:highdsampler} terminates is
geometrically distributed with mean
$\kappa_t \tau$.  In particular, with
high probability the number of samples is
$O(\kappa_t \tau)$ due to exponential tail bounds.
\end{enumerate}

%\yzcomment{should we move the following part to the experiment section?} \paulcomment{nope keep it here reader will be asking themselves ''where's $\kappa_t$'' after reading \cref{alg:highdsampler}}
\paragraph{Computing $\kappa_t$}
The astute reader will notice that $\kappa_t$
need not be computed explicitly for
\cref{alg:capped}, i.e., for online inference.
However an estimate of $\kappa_t$ might
be useful for having more accurate
importance-weights for offline reuse.
% TODO: try this (?)
% The number of samples $n$ before \cref{alg:highdsampler} terminates provides an estimate via the method of moments: $\tau \kappa_t \approx n$.
For our experiments we use the naive
constant estimate $\wh \kappa_t = 1$,
and leave this an area for future work.

\section{Experiments}
\label{sec:experiment}
We conduct multiple experiments in this section. In \cref{sec:normalization}, we empirically compare the performance of \cref{thm:sample} and \cref{alg:normcs}.
We compare our algorithm \cappedIGW with the previous state-of-the-art algorithm \smthigw \citep{zhu2022contextual} in terms of both the online performance (\cref{sec:online}) and the offline utility (\cref{sec:offline}).
We also demonstrate why \smthigw lacks offline utility in \cref{sec:smthigw_greedy}.
Code to reproduce all experiments available
at \url{https://github.com/mrucker/onoff_experiments}.

\subsection{Normalization CS}
\label{sec:normalization}

This experiment establishes the empirical
validity and efficacy of \cref{alg:normcs}.
For these simulations we use the unit interval
as the action space; Lebesgue reference measure;
$\widehat f_t(x_t, a_t) = 1_{2 a_t \tau > 1}$,
corresponding to loss function which is a narrow ``needle in the haystack'';
failure probability $\delta = 2.5\%$;
and $\kappa_\infty = 24$.  As indicated
in \cref{tab:normcsvsfixedbern},
\cref{alg:normcs} is a vast improvement over
the procedure from \cref{thm:sample}.  Note
in \cref{tab:normcsvsfixedbern}, $\kappa_t$
is the true value computed analytically
from the $\beta_t$ produced by \cref{alg:normcs}.

\begin{table}[h]
\caption{\cref{alg:normcs} is vastly more sample efficient than the procedure from \cref{thm:sample}.  The $n$ and $\kappa_t$ from \cref{alg:normcs} are random variables: shown are 95\% bootstrap CI of the realization (\emph{not} the population mean) over different sampler seeds.}
\vskip 0.15in
\begin{center}
\begin{small}
\begin{sc}
\begin{tabular}{ccccc}
\toprule
$\tau$ &
$\gamma$ &
$n$ (\hyperref[thm:sample]{Thm~\ref*{thm:sample}}) &
$n$ (\hyperref[alg:normcs]{Alg~\ref*{alg:normcs}}) &
$\kappa_t$ (\hyperref[alg:normcs]{Alg~\ref*{alg:normcs}}) \\
\midrule
2 & 16 & 942 & [18, 24] & [1.3, 3.0] \\
20 & 304 & 13496 & [123, 227] & [10.2, 11.8] \\
200 & 6368 & 177141 & [2254, 2788] & [1.8, 23.6] \\
\bottomrule
\end{tabular}
\end{sc}
\end{small}
\end{center}
\vskip -0.1in
\label{tab:normcsvsfixedbern}
\end{table}

\subsection{Online Regret}
\label{sec:online}

   \begin{figure}[t]
   	\centering
   	\includegraphics[width=0.49\textwidth]{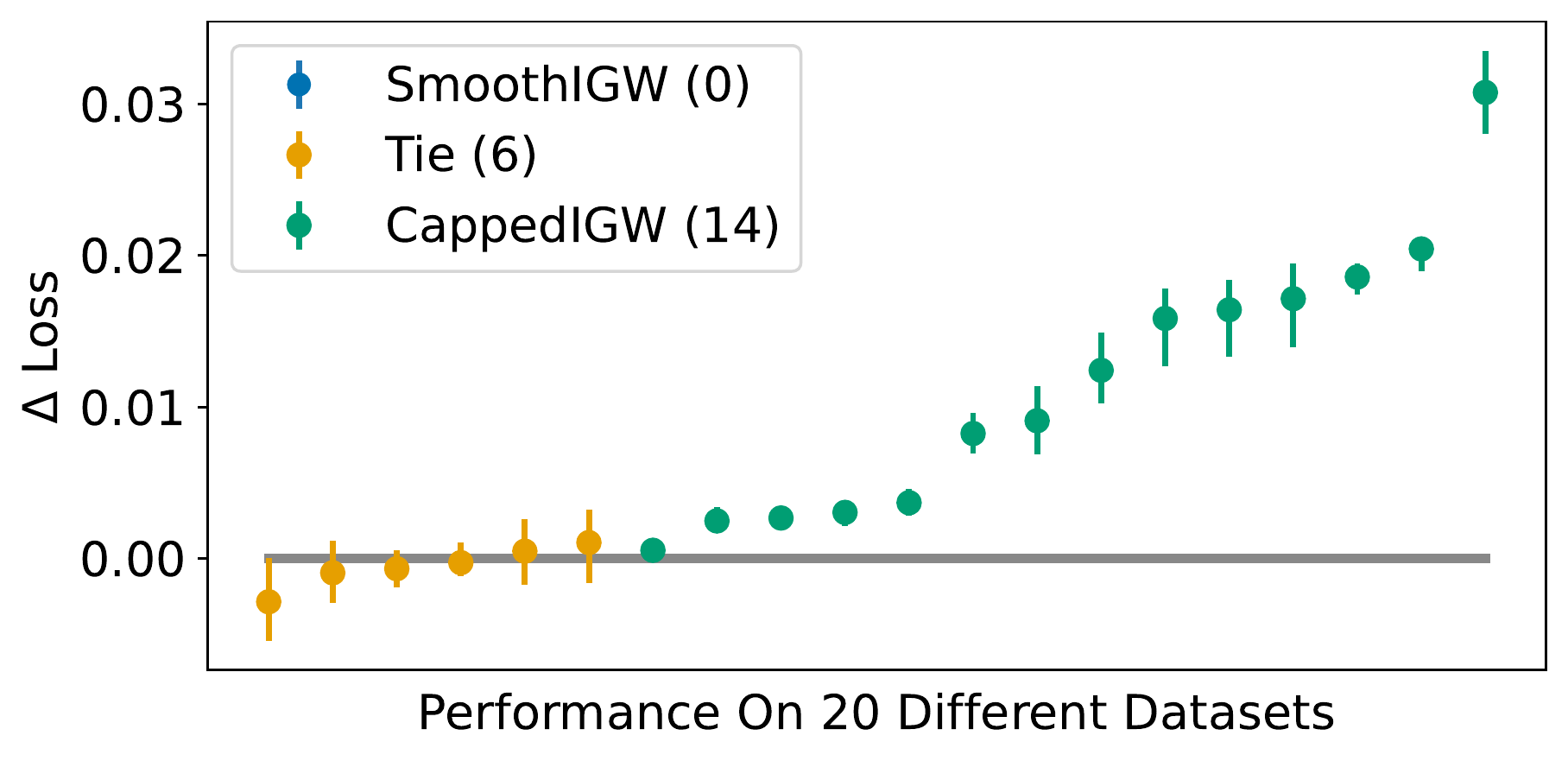}
         \vskip -12pt
   	\caption{Consistent with their similar theoretical guarantees, online performance
        of {\scriptsize \smthigw} and {\scriptsize \cappedIGW} is similar, although {\scriptsize \cappedIGW} enjoys a slight advantage. Each datapoint represents a single dataset.  Plotted here is $\left(\text{Loss}({\scriptsize \smthigw})-\text{Loss}({\scriptsize \cappedIGW})\right)$ with 90\% bootstrap CIs, i.e., larger values in the plot indicate ${\scriptsize \cappedIGW}$ is outperforming ${\scriptsize \smthigw}$.  Win/tie/loss is determined by if the CI contains 0.}
        \vskip -12pt
   	\label{fig:online_diff}
   \end{figure}

Here we demonstrate that \smthigw has similar
online regret to \cappedIGW. We use twenty
regression datasets converted to contextual bandit
datasets with action space $\mathcal{A} = [0, 1]$ via a
supervised-to-bandit transformation
\cite{bietti2021contextual}. Each dataset
is individually shifted and scaled so that
target value $y_t \in [0, 1]$.  When an
algorithm plays action $a_t \in [0, 1]$,
it receives bandit feedback $\ell_t(a_t) \ldef | y_t - a_t |$.

We assess each algorithm (\smthigw, \cappedIGW)
on progressive validation
loss.~\cite{blum1999beating}. For each
dataset we run both algorithms using the same
set of $30$ different
seeds, where a seed controls
all non-determinism (including data set
shuffling, parameter initialization, and
action sampling). For each dataset we compute
the average of the paired (by seed) differences
between each algorithm, and then
compute a 90\% bootstrap confidence interval.

The two algorithms are declared to have tied
on a dataset when the 90\% CI
for their difference contains a $0$. Otherwise
one of the algorithms is declared to win. In
total we observe five ties, one small \smthigw
win, and fourteen small \cappedIGW wins. The
complete result can be seen in Figure~\ref{fig:online_diff}.

This experiment also demonstrates the effectiveness 
of \cref{alg:normcs} within \cappedIGW. In this 
experiment \cappedIGW determines $\beta_t$ each 
iteration using \cref{alg:normcs} with $\kappa_{\infty} = 4$.

For further details (e.g., model class for $\wh f_t$) see \cref{app:onlineregretexperiment}.

\subsection{Offline Utility}
\label{sec:offline}

This experiment provides an example of
the increased utility of \cappedIGW's data exhaust
for offline learning relative to \smthigw's exhaust.  
Here we mimic a typical production goal of evaluating
a more complicated model class than was
used online qua \citet[$\S4.6$]{gomez2015netflix}.
As shown in \cref{fig:offline_diff},
offline learners trained on \cappedIGW exhaust exhibit
statistically significant smaller average loss on twelve of
twenty datasets.

\begin{figure}[t]
\begin{minipage}[t]{0.48\linewidth}
\vskip 0pt
{
    \centering
    \includegraphics[width=0.99\textwidth]{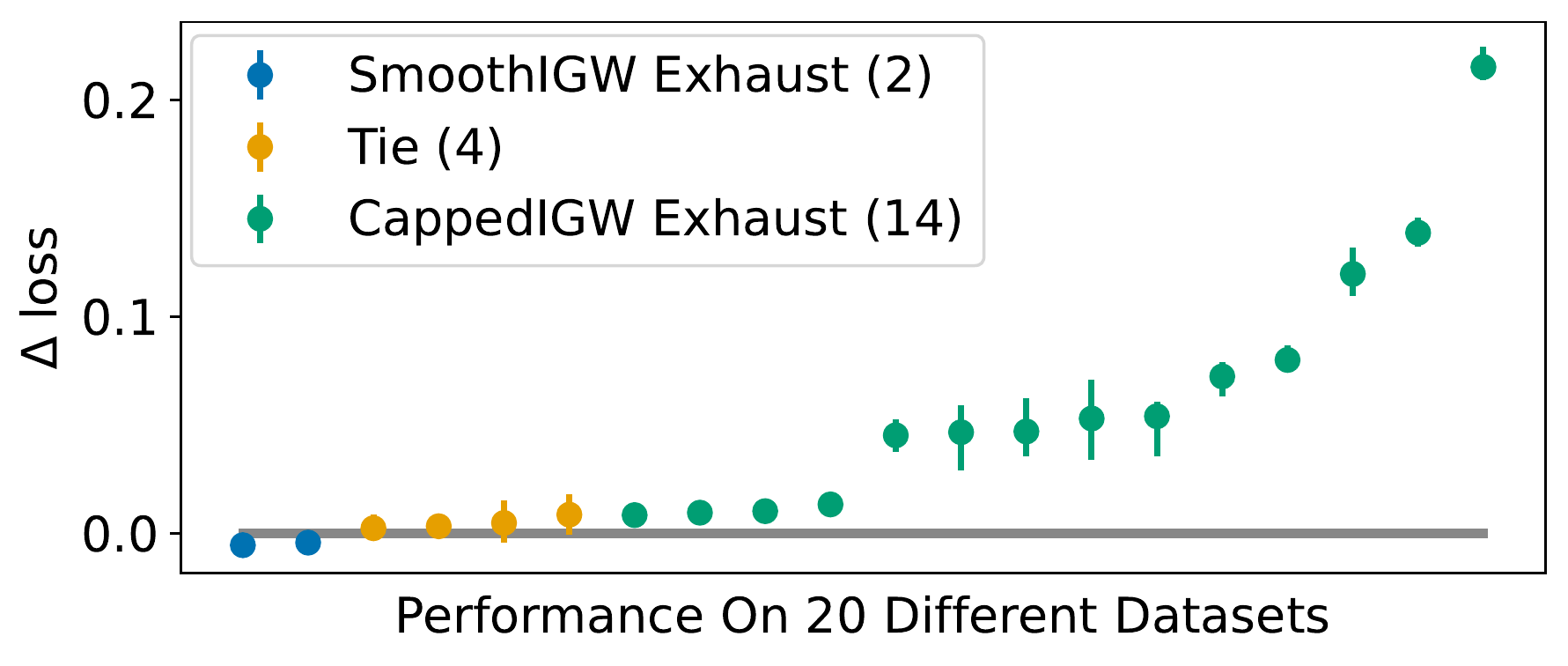}
}
\end{minipage}
\hfill
\begin{minipage}[t]{0.48\linewidth}
\vskip 0pt
{
    \centering
    \caption{Policies trained offline using {\scriptsize \cappedIGW} exhaust exhibit less average loss online compared to policies trained offline using {\scriptsize \smthigw} exhaust. Plotted here is $\left(\text{Loss}({\scriptsize \smthigw})-\text{Loss}({\scriptsize \cappedIGW})\right)$ with 90\% bootstrap CIs, i.e., larger values in the plot indicate training on ${\scriptsize \cappedIGW}$ exhaust is superior to training on ${\scriptsize \smthigw}$ exhaust. Win/tie is determined by the CI containing $0$.}\label{fig:offline_diff}

}
\end{minipage}
\end{figure}

\begin{figure}[t]
\begin{minipage}[t]{0.48\linewidth}
\vskip 0pt
{
    \vspace{0.23cm}
    \centering
    \caption{Over time {\scriptsize \smthigw} increasingly plays its greedy action, which does not have a well defined importance-weight. (Not shown) With {\scriptsize \cappedIGW} the data exhaust always has an importance-weight.}
    \label{fig:fractionofindic}
}
\end{minipage}
\hfill
\begin{minipage}[t]{0.48\linewidth}
\vskip 0pt
{
    \centering
    \includegraphics[width=0.99\textwidth]{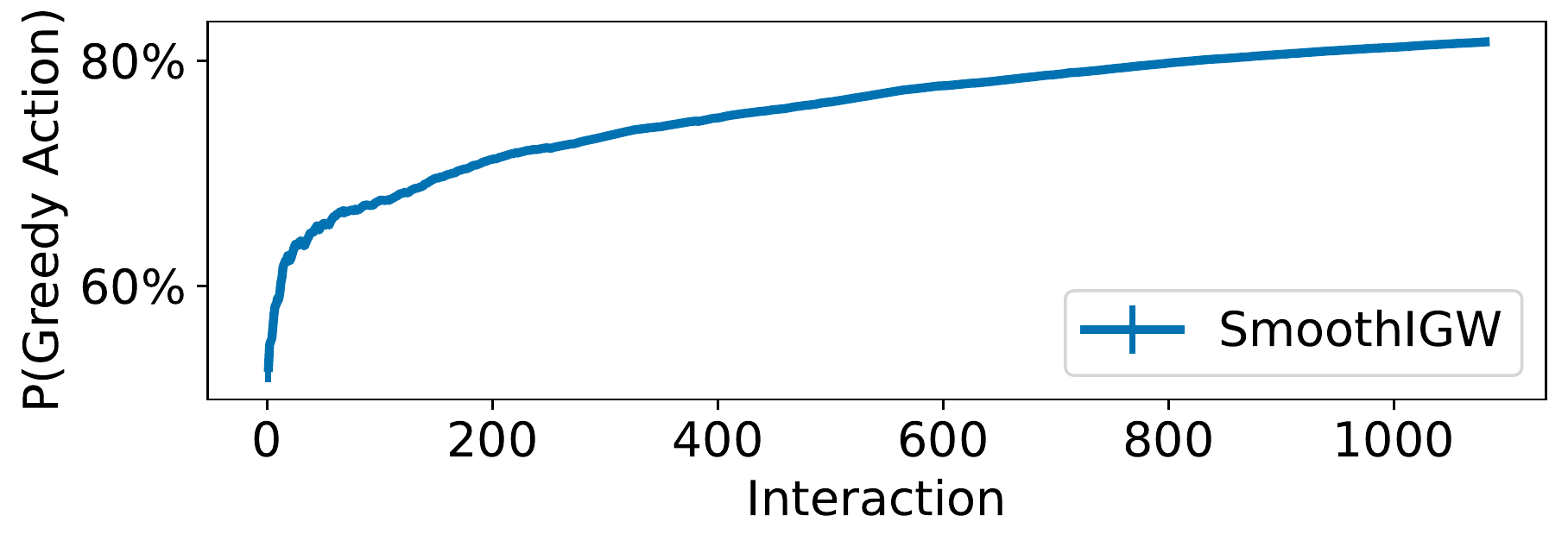}
    \vspace{-.5cm}
}
\end{minipage}
\end{figure}

To generate data exhaust all $(x_t, a_t, \wh{P}_t(a_t), \ell_t(a_t))$ were logged during the online experiments described in Section~\ref{sec:online}, where
$$
\wh{P}_t(a_t) \ldef \gfunc{\beta_t},
$$
i.e., we (naively) estimate
$\wh{\kappa_t} = 1$.  We use the inverse
of $\wh{P}_t(a_t)$ as the importance weight.

For each resulting
dataset (\smthigw exhaust or
\cappedIGW exhaust), the best of two
off-policy learning methods was selected:
the direct method~\cite{dudik2011doubly},
which does not use importance-weights; and 
clipped IPS~\cite{strehl2010learning}, where for
\smthigw exhaust we assign the greedy action
the maximum importance weight of 5.

To train the offline models data exhaust is split 80\%-10\%-10\% for training, validation and testing respectively. Training epochs are performed on the training set until a decrease in model performance is observed on the validation set. After training learners are assessed using the average loss on
the test set. Note validation and test evaluation are independent of what data exhaust was used to train, as the source datasets contain the true label and therefore admit on-policy evaluation.

For each dataset we run the offline learners 30 times using the exhaust files generated from the 30 online seeds. For each dataset we compute the average of the paired (by exhaust) differences between offline learners and then compute a 90\% bootstrap confidence interval.

For further details see \cref{app:offlineregretexperiment}.

\subsection{\smthigw Increasingly Plays Greedy}
\label{sec:smthigw_greedy}

Here we show the frequency that \smthigw plays its greedy action during the online experiment described in \cref{sec:online}. This is not a problem for online performance. Rather, as described in Section~\ref{sec:statistical_background}, this only becomes a problem when attempting to conduct post-hoc analysis with importance-weighting techniques. We can see in \cref{fig:fractionofindic} that by the $1,000^{th}$ learning iteration in the online experiment over 80\% of played actions no longer have usable importance weights for post-hoc analysis.

\subsection{\cappedIGW Sensitivity to \texorpdfstring{$\kappa_\infty$}{κ-∞}}

Here we look at the effect of varying levels of $\kappa_\infty$ on online and offline performance using our 20 Datasets. For these experiments we set $\kappa_\infty$ equal to 2, 4, and 24 (note, $\kappa_\infty$ was 4 for  experiments in Section~\ref{sec:online} and \ref{sec:offline}).

In our experiments the value of $\kappa_\infty$ strongly impacted the number of samples required to estimate $\beta_t$ with smaller values of $\kappa_t$ requiring more samples (\cref{fig:k_samps}). This is expected given that smaller values of $\kappa_\infty$ indicate tighter confidence bounds on $\beta_t$.

\begin{figure}[t]
\begin{minipage}[t]{0.49\linewidth}
\vskip 0pt
{
    \centering
    \includegraphics[width=0.99\textwidth]{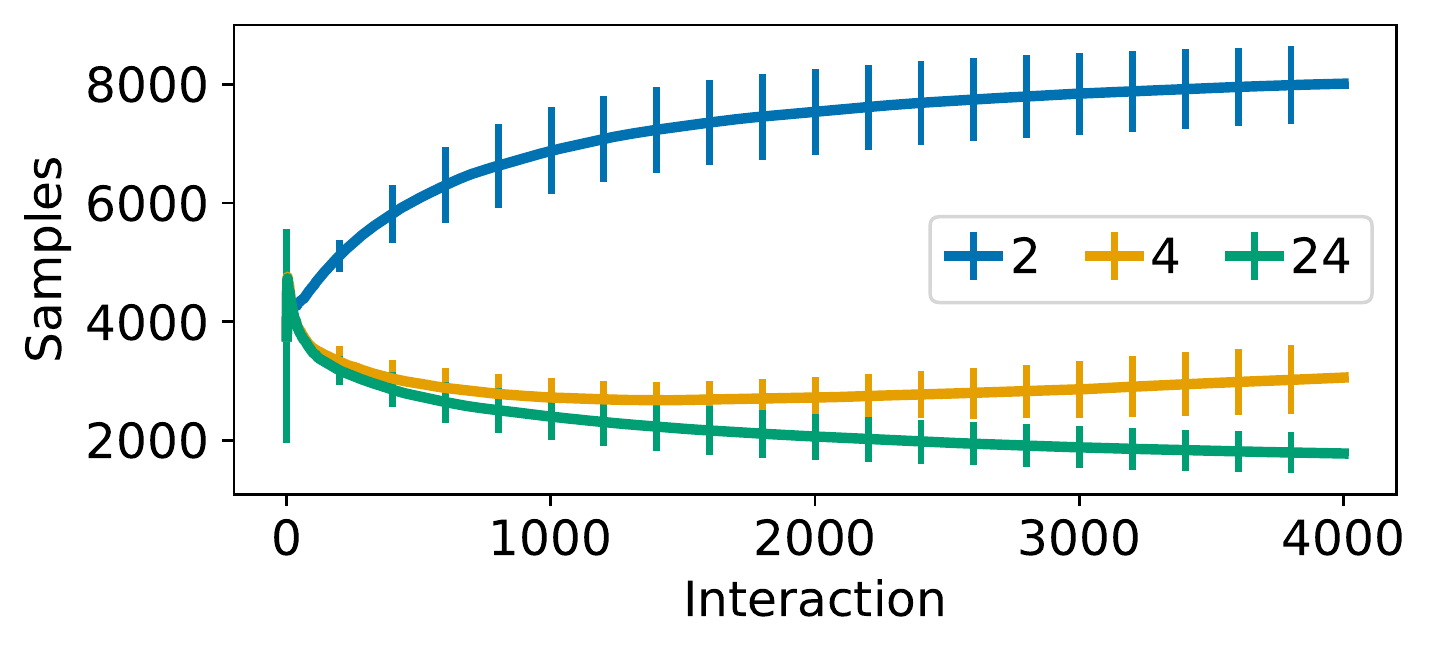}
}
\end{minipage}
\hfill
\begin{minipage}[t]{0.48\linewidth}
\vskip 0pt
{
    \centering
    \caption{We see an increase in the number of samples required to estimate $\beta_t$ on each learning update of {\scriptsize \cappedIGW} as $\kappa_\infty$ shrinks.} \label{fig:k_samps}
}
\end{minipage}
\end{figure}

\begin{figure}[t]
\begin{minipage}[t]{0.48\linewidth}
\vskip 0pt
{
    \centering
    \includegraphics[width=0.99\textwidth]{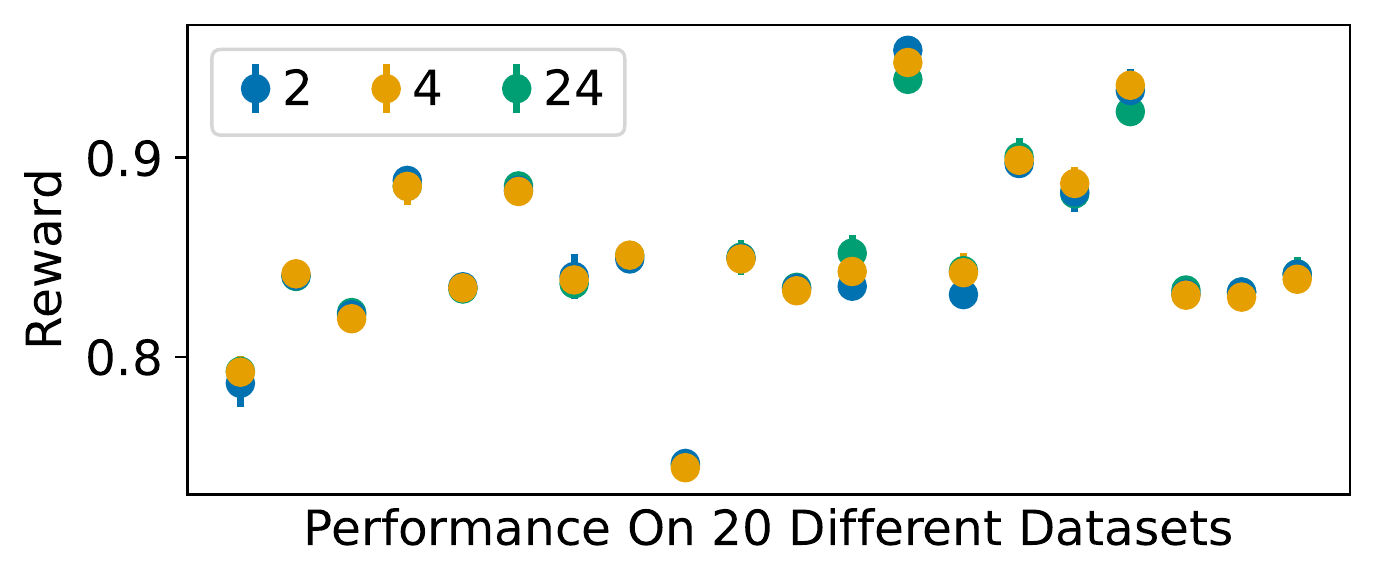}
    \caption{We see very little difference in online performance across our 20 datasets for varying levels of $\kappa_\infty$ in {\scriptsize \cappedIGW}.} \label{fig:k_online}
}
\end{minipage}
\hfill
\begin{minipage}[t]{0.48\linewidth}
\vskip 0pt
{
    \centering
    \includegraphics[width=0.99\textwidth]{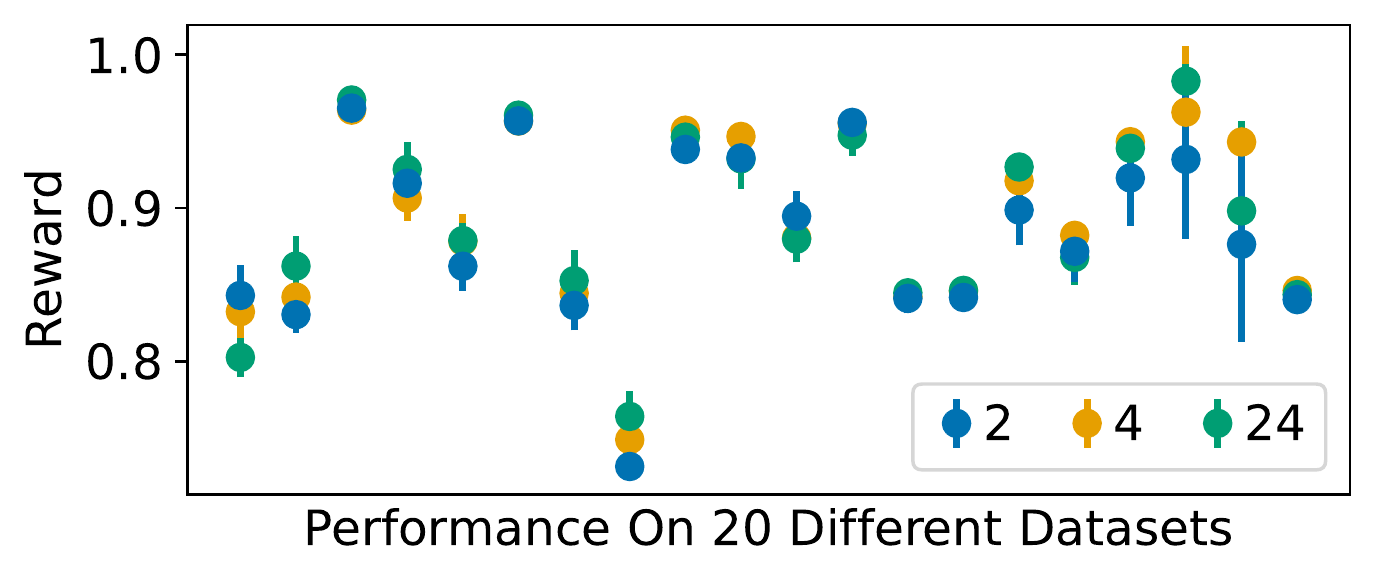}
    \caption{We tend to see increased offline performance across our 20 datasets with data exhaust from {\scriptsize \cappedIGW} when $\kappa_\infty$ is 4 or 24.}
    \label{fig:k_offline}
}
\end{minipage}
\end{figure}

We observe a negligible impact in online performance for the three levels of $\kappa_\infty$ (\cref{fig:k_online}). In most datasets the average reward seen was nearly identical at all levels. At the same time we observe a slight increase in offline utility when training on the online exhaust generated with $\kappa_\infty$ equal to 4 or 24 (\cref{fig:k_offline}).

For further details see \cref{app:sensitivityexperiment}.

\section{Additional Related Work}

In this section, we briefly highlight related
work that we have not already mentioned in previous sections.

\paragraph{Large action spaces with additional assumptions}
Unlike contextual bandits with finite action sets, infinite
(or very large) action space contextual
bandits are minimax
intractable---as observed from the lower bound in \cite{agarwal2012contextual}.
Nonetheless, the setting with infinite action spaces are highly
practical in many real-world scenarios, e.g., in large-scale recommender systems.
To make progress in this setting, researchers have develop algorithms that work with additional modeling assumptions, such as
contextual bandits with linear functions \citep{auer2002using, li2010contextual, abbasi2011improved}, with linearly-structured actions and general (context) function approximation \citep{foster2020adapting, xu2020upper, zhu2022large}, with
Lipschitz/H\"older regression functions \citep{kleinberg2004nearly, hadiji2019polynomial}, and with convex functions \citep{lattimore2020improved}.
While these modeling assumptions have lead to fruitful theoretical guarantees, they might be violated in practice.

\begin{comment}

Unlike the finite action case, infinite
(or very large) action space contextual
bandits are minimax
intractable.~\cite{agarwal2012contextual}.
Nonetheless the practical importance of
this setting has  motivated several
approaches. One strategy is to assume
additional structure on the regressor class,
e.g., linearity~\cite{auer2002using}, linearly
structured actions and general function approximation~\citep{foster2020adapting, xu2020upper}, Lipschitz/H\"older regression functions \citep{kleinberg2004nearly, hadiji2019polynomial}, and convex functions \citep{lattimore2020improved}.
While these assumptions are fruitful theoretically, they might be violated in practice.
\end{comment}

\paragraph{Large action spaces with smooth regret}
An alternative line of research to tackling the large
action spaces problems, in which
this paper sits, is to weaker the competing benchmark
to avoid the otherwise minimax negative result.
This idea was first proposed in non-contextual bandits by \citet{chaudhuri2018quantile}, where they compete against the $1-\alpha$th quantile (of reward) instead of highest reward.
In the case with contextual bandits, \citet{krishnamurthy2020contextual}
proposed a variant of the smooth regret (defined in \cref{sec:smooth_regret})
for agnostic policy-based analysis.
\citet{krishnamurthy2020contextual} develops algorithms that are statistically optimal, but computationally intractable;
a computationally tractable instantiation was
later developed in \citet{majzoubi2020efficient} (but with slightly weaker statistical performance).
We remark here that, even though our definition of smoothed regret (in \cref{sec:smooth_regret})
dominates the one appearing in \citet{krishnamurthy2020contextual}, our
approach requires an additional realizability
assumption to reduce to regression (instead of classification);
\citet{foster2020adapting} shows how to
manage misspecification within a reduction
to regression framework.

\paragraph{Offline learning in contextual bandits}
Offline learning, or off-policy evaluation, considers the problem of learning 
a new policy/model only using historic logging data collected from other online policies.
Because offline learning permits learning/testing without
costly online exploration, it has been used in many real-world applications, 
such as recommender systems \citep{thomas2017predictive} and healthcare industry \citep{nie2021learning}.
Focusing on contextual bandits, the method of inverse propensity scoring (IPS) \citep{horvitz1952generalization} 
has been extensively used to correct the mismatch between action distributions under 
the offline and online policies. 
Besides the IPS method, 
the direct method (DM) \citep{dudik2011doubly, rothe2016value} has also been used in offline learning where 
the learner first learns a reward estimator based on the offline data and then evaluates the new policies.

\section{Discussion}

This work exhibits a statistical free lunch:
the online regret guarantee of an algorithm is
essentially unchanged, while the subsequent
offline utility of the data exhaust is
increased.\footnote{Although there is a computational cost, 
this is arguably mitigated by eliminating the $\argmin$ oracle.}  
We speculate this is not typical
but rather an artifact of the sub-optimality
of the prior technique.  In other words, we
anticipate that online regret and offline
utility are conflicting objectives that must
be traded off, suggesting a currently unknown
Pareto frontier remains to be discovered.
The empirical study of \citet{williams2021challenges} provides
evidence in this direction.

\bibliography{main}
\bibliographystyle{styles/icml2023/icml2023}

%%%%%%%%%%%%%%%%%%%%%%%%%%%%%%%%%%%%%%%%%%%%%%%%%%%%%%%%%%%%%%%%%%%%%%%%%%%%%%%
%%%%%%%%%%%%%%%%%%%%%%%%%%%%%%%%%%%%%%%%%%%%%%%%%%%%%%%%%%%%%%%%%%%%%%%%%%%%%%%
% APPENDIX
%%%%%%%%%%%%%%%%%%%%%%%%%%%%%%%%%%%%%%%%%%%%%%%%%%%%%%%%%%%%%%%%%%%%%%%%%%%%%%%
%%%%%%%%%%%%%%%%%%%%%%%%%%%%%%%%%%%%%%%%%%%%%%%%%%%%%%%%%%%%%%%%%%%%%%%%%%%%%%%
\newpage
\appendix
\onecolumn

\section{Background: Minmax Reduction Design \label{sec:background}}
Our approach is based on the work of \citet{foster2021statistical}, which we review here. From this work, we define the Decision-Estimation Coefficient for any smoothness level $\tau \geq 1$, function class $\cF$, context $x \in \cX$, and $\AlgSq$ estimate $\wh f$:
\begin{equation}
    \dec_\gamma(\cF; \wh f, x) \ldef \inf_{P \in \Delta(\cA)} \sup_{Q \in \cQ_\tau} \sup_{f \in \cF} \E_{a\sim P, a^\star \sim Q} \brk*{f(x, a) - f(x, a^\star) - \frac{\gamma}{4} \cdot \prn*{\widehat f(x,a ) - f(x, a)}^2},
    \label{eqn:minimax_game}
\end{equation}
where $f$ is the true loss function, $Q$ is the optimal smoothed policy with respect to $f$, $P$ is a policy of our choosing, and $\gamma$ is a tunable learning rate. Note that with this formulation $\smtht(x) = -\sup_{Q \in \cQ_\tau}\E_{a \sim Q}\brk{-f^\star(x,a)}$ with respect to results in the paper.

Our goal is to construct $P$ such that we can derive an upper bound on $\dec$. Because $\dec$ is the difference between the expectation of $\regcbt$ and $\regsq$ an upper bound on $\dec$ implies that $\regcbt(T)$ has an upper bound in terms of $\regsq(T)$. This allows us to reduce the CB problem to simply minimizing the $\regsq(T)$ via any regression oracle $\AlgSq$ of our choosing.

For our work we prove a bound on the Decision-Estimation Coefficient in \cref{app:decbound} and from there derive a regret bound in \cref{app:regbound}.

\section{Proof of Theorem~\ref{thm:regret}}
\label{app:proofthmregret}

\thmRegret*

The proof proceeds by first bounding the \dectext~\cite{foster2021statistical},
after which the regret bound follows almost directly.

\subsection{Bounding the \dectext \label{app:decbound}}

With respect to any context $x\in \cX$ and estimator $\wh
f $ obtained from \sqalgtext, we consider \cref{eqn:minimax_game}.

\begin{lemma}[\citet{zhu2022contextual}]
    \label{lm:dec_chi2}
    Fix constant $\gamma > 0$ and context $x\in\cX$ . For any measures $P$ and $Q$ such that $Q \ll P$, we have
    \begin{align*}
        &  \sup_{f \in \cF} \E_{a\sim P, a^\star \sim Q} \brk*{f(x, a) - f(x, a^\star) - \frac{\gamma}{4} \cdot \prn*{\widehat f(x,a ) - f(x, a)}^2} \\
        & \leq \E_{a \sim P} \brk[\big]{\widehat f(x, a)} - \E_{a \sim Q} \brk[\big]{\widehat f(x, a)} +\frac{1}{\gamma} \cdot  \E_{a \sim P}\brk*{ \prn*{\frac{dQ}{dP}(a) - 1}^2 }.
    \end{align*}
\end{lemma}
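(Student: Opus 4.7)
\textbf{Proof proposal for \cref{lm:dec_chi2}.} The plan is to manipulate the left-hand side into a form that is quadratic in the ``residual'' $h(a) \ldef f(x,a) - \widehat f(x,a)$, and then to apply Fenchel--Young (AM--GM) pointwise so that the resulting bound no longer depends on $f$, killing the supremum. The chi-squared-looking term on the right-hand side will emerge naturally as the conjugate dual of the square-loss penalty.

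First, I would write $f(x,a) - f(x,a^\star) = \bigl(\widehat f(x,a) - \widehat f(x,a^\star)\bigr) + h(a) - h(a^\star)$ and split the expectation. The $\widehat f$ part is independent of the optimization over $f$ and gives exactly the first two terms on the right-hand side, $\E_{a\sim P}[\widehat f(x,a)] - \E_{a^\star\sim Q}[\widehat f(x,a^\star)]$. What remains inside the $\sup_f$ is
\begin{align*}
\E_{a\sim P}[h(a)] - \E_{a^\star\sim Q}[h(a^\star)] - \tfrac{\gamma}{4}\,\E_{a\sim P}[h(a)^2].
\end{align*}
Next, using $Q \ll P$ I rewrite $\E_{a^\star\sim Q}[h(a^\star)] = \E_{a\sim P}\bigl[\tfrac{dQ}{dP}(a)\, h(a)\bigr]$, so the display becomes $\E_{a\sim P}\bigl[h(a)\bigl(1 - \tfrac{dQ}{dP}(a)\bigr) - \tfrac{\gamma}{4} h(a)^2\bigr]$.

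The key step is the pointwise inequality $u v \leq \tfrac{\gamma}{4} u^2 + \tfrac{1}{\gamma} v^2$ (AM--GM with the right weighting), applied with $u = h(a)$ and $v = 1 - \tfrac{dQ}{dP}(a)$. This gives
\begin{align*}
h(a)\bigl(1 - \tfrac{dQ}{dP}(a)\bigr) - \tfrac{\gamma}{4} h(a)^2 \;\leq\; \tfrac{1}{\gamma}\,\bigl(\tfrac{dQ}{dP}(a) - 1\bigr)^2.
\end{align*}
Taking expectation under $P$ and noting that the right-hand side does not depend on $f$ (hence also not on $h$), the supremum over $f \in \cF$ is dominated by this $f$-free quantity, yielding the claimed bound.

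The main obstacle, such as it is, is really just picking the correct splitting of $f = \widehat f + h$ so that the penalty $(\widehat f - f)^2 = h^2$ is quadratic in the same variable appearing linearly elsewhere; once that is done, the AM--GM (equivalently, completing the square) step is forced. No measure-theoretic subtlety arises beyond the use of $Q \ll P$ to define $\tfrac{dQ}{dP}$ and perform the change-of-measure, which is given in the hypothesis. It is worth pointing out for the reader that $\E_{a\sim P}\bigl[\bigl(\tfrac{dQ}{dP}(a) - 1\bigr)^2\bigr]$ is exactly the $\chi^2$-divergence $\chi^2(Q \,\|\, P)$, which motivates the lemma's label and explains why the resulting DEC bound can be controlled whenever $P$ is chosen to have bounded density ratio with respect to the competitor class $\cQ_\tau$.
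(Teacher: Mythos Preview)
Your proof is correct. The paper does not prove this lemma itself---it is quoted from \citet{zhu2022contextual}---but your argument (split $f=\widehat f+h$, change measure via $Q\ll P$, then apply the weighted AM--GM $uv\le \tfrac{\gamma}{4}u^2+\tfrac{1}{\gamma}v^2$ pointwise to eliminate $h$) is precisely the standard derivation used there and in the broader DEC literature.
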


Subsequently we omit the dependence on the context $x\in \cX$, and use abbreviations $f(a) \ldef f(x,a)$ and $\wh f(a) \ldef \wh f(x,a)$.

We first notice that for any $Q \in \cQ_\tau$ we have $Q \ll P_t$ for $P_t$
defined in \cref{eqn:cappeddensity}: since (i) $Q \ll \mu$ by definition,
and (ii) $\mu \ll P_t$.  Therefore, applying \cref{lm:dec_chi2} we have
\begin{align}
        & \E_{a\sim P_t, a^\star \sim Q} \brk*{f( a) - f(a^\star) - \frac{\gamma}{4} \cdot \prn*{\widehat f(a ) - f(a)}^2}\nonumber\\
        & \leq
\E_{a\sim P_t} \brk[\big]{\widehat f(a)} - \E_{a \sim Q} \brk[\big]{\widehat f(a)} + \frac{1}{\gamma} \cdot  \E_{a \sim P_t}\brk*{ \prn*{\frac{dQ}{dP_t}(a) - 1}^2 }. \nonumber
\end{align}
Denote $p(a) = \frac{dP_t}{d\mu}(a)$ and $q(a) = \frac{dQ}{d\mu}(a)$.  Continuing
    \begin{align}
        & \E_{a\sim P_t} \brk[\big]{\widehat f(a)} - \E_{a \sim Q} \brk[\big]{\widehat f(a)} + \frac{1}{\gamma} \cdot  \E_{a \sim P_t}\brk*{ \prn*{\frac{dQ}{dP_t}(a) - 1}^2 }  \nonumber \\
        & = \E_{a \sim \mu} \brk*{p(a) \cdot \prn[\Big]{\wh f(a) - \beta} } - \E_{a \sim \mu} \brk*{q(a) \cdot \prn[\Big]{\wh f(a) - \beta } } + \frac{1}{\gamma} \cdot \E_{a \sim \mu} \brk*{p(a) \cdot \prn*{\frac{q(a)}{p(a)} - 1}^2 } \nonumber \\
        & = \E_{a \sim \mu} \brk*{p(a) \cdot \prn[\Big]{\wh f(a) - \beta } } - \E_{a \sim \mu} \brk*{q(a) \cdot \prn[\Big]{\wh f(a) - \beta } } + \frac{1}{\gamma} \cdot \E_{a \sim \mu} \brk*{q(a) \cdot \frac{q(a)}{p(a)} - 2 q(a) + p(a)  } \nonumber \\
	& = \E_{a \sim \mu} \brk*{p(a) \cdot \prn[\Big]{\wh f(a) - \beta } } +
	\frac{1}{\gamma} \cdot \E_{a \sim Q} \brk*{\frac{q(a)}{p(a)} - \gamma \cdot \prn*{\wh f(a) - \beta } } - \frac{1}{\gamma} \nonumber
 \\
 & = \E_{a \sim \mu} \brk*{p(a) \cdot \max\crl[\Big]{0, \wh f(a) - \beta } }
 + \frac{1}{\gamma} \cdot \E_{a \sim Q} \brk*{\frac{q(a)}{p(a)} - \gamma \cdot \max\crl*{0, \wh f(a) - \beta } }  \nonumber \\
 & \quad + \E_{a \sim \mu} \brk*{\left( p(a) - q(a) \right) \cdot \min \crl[\big]{0, \wh f(a) - \beta } } - \frac{1}{\gamma}. \label{eqn:decpoint1}
   \end{align}
Now we note the definition of $P_t$ implies
\begin{align}
\E_{a \sim \mu} \brk*{p(a) \cdot \max\crl[\Big]{0, \wh f(a) - \beta } } &\leq \frac{\kappa_t \tau}{\gamma}; \label{eqn:miniboundone}
\end{align}
furthermore, the constraints $q(a) \leq \tau$ and $\kappa_t \geq 1$ imply
\begin{align}
&\frac{1}{\gamma} \cdot \E_{a \sim Q} \brk*{\frac{q(a)}{p(a)} - \gamma \cdot \max\crl*{0, \wh f(a) - \beta } } \nonumber \\
&\leq \frac{1}{\gamma} \E_{a \sim Q} \brk*{\frac{1 + \gamma \cdot \max\crl*{0, \wh f(a) - \beta }}{\kappa_t} - \gamma \cdot \max\crl*{0, \wh f(a) - \beta } } \nonumber \\
&\leq \frac{1}{\gamma}; \label{eqn:miniboundtwo}
\end{align}
and finally
\begin{align}
&\E_{a \sim \mu} \brk*{\left( p(a) - q(a) \right) \cdot \min \crl[\big]{0, \wh f(a) - \beta } } \nonumber \\
&\leq (\kappa_t - 1) \tau \ \E_{a \sim \mu} \brk*{\min \crl[\big]{0, \wh f(a) - \beta } } \nonumber \\
&\leq 0. \label{eqn:miniboundthree}
\end{align}
Substituting \cref{eqn:miniboundone}, \cref{eqn:miniboundtwo}, and \cref{eqn:miniboundthree} into \cref{eqn:decpoint1} yields
\begin{align}
        \E_{a\sim P_t, a^\star \sim Q} \brk*{f( a) - f(a^\star) - \frac{\gamma}{4} \cdot \prn*{\widehat f(a ) - f(a)}^2} & \leq \frac{\kappa_t \tau}{\gamma}. \label{eqn:finaldecbound}
\end{align}

\subsection{Finishing the proof \label{app:regbound}}

\DeclarePairedDelimiter{\sq}{[}{]}

This part is almost verbatim from \citet[Thm 1, Appendix A.2]{zhu2022contextual}, but included for completeness.

   We use abbreviation $f_t(a) \ldef f(x_t,a)$ for any $f \in \cF$.
    Let $a^\star_t$ denote the action sampled according to the best smoothing kernel within $\cQ_\tau$ (which could change from round to round).
    We let $\cE$ denote the good event where the regret guarantee stated in \cref{assumption:regression_oracle} (i.e., $\regsq(T) \ldef \regsq(T, T^{-1})$) holds with probability at least  $1- T^{-1}$. Conditioned on this good event, following the analysis provided in \citet{foster2020adapting}, we decompose the contextual bandit regret as follows.
    \begin{align*}
	    \E \brk*{\sum_{t=1}^T f_t^\star(a_t) - f_t^\star(a^\star_t)}
        & = \E \sq*{\sum_{t=1}^T f_t^\star(a_t) - f_t^\star(a^\star_t) - \frac{\gamma}{4} \cdot \prn*{\wh f_t( a_t) - f_t^\star(a_t)}^2}
        + \frac{\gamma}{4} \cdot  \E \sq*{\sum_{t=1}^T \prn*{\wh f_t(a_t) - f_t^\star(a_t)}^2} \nonumber \\
        & \leq T \cdot \frac{\kappa_\infty \tau}{\gamma} + \frac{\gamma}{4} \cdot  \E \sq*{\sum_{t=1}^T \prn*{\wh f_t(a_t) - f_t^\star(a_t)}^2},
    \end{align*}
    where the bound on the first term follows from \cref{eqn:finaldecbound}. We analyze the second term below.
    \begin{align*}
        & \frac{\gamma}{4} \cdot \E \sq*{\sum_{t=1}^T \prn*{ \prn*{\wh f_t(a_t) - \ell_t(a_t)}^2 -\prn[\Big]{f^{\star}(a_t) - \ell_t(a_t)}^2  + 2 \prn[\Big]{\ell_t(a_t) - f^\star_t(a_t)} \cdot \prn[\Big]{\wh f_t(a_t) - f^\star_t(a_t)}}} \\
        & =  \frac{\gamma}{4} \cdot \E \sq*{\sum_{t=1}^T \prn*{ \prn*{\wh f_t(a_t) - \ell_t(a_t)}^2 -\prn[\Big]{f^\star_t(a_t) - \ell_t(a_t)}^2  }} \\
        & \leq \frac{\gamma}{4} \cdot \regsq(T),
    \end{align*}
    where on the second line follows from the fact that $\E \sq{\ell_t(a) \mid x_t} = f^\star(x_t,a)$ and $\ell_t$ is conditionally independent of $a_t$, and the third line follows from the bound on regression oracle stated in \cref{assumption:regression_oracle}.
    As a result, we have
    \begin{align*}
        \regcbt(T) \leq \frac{T \kappa_\infty \tau}{\gamma} + \frac{\gamma}{4} \cdot \regsq(T) + O(1),
    \end{align*}
    where the additional term $O(1)$ accounts for the expected regret suffered under event  $\neg \cE$.
    Taking $\gamma = \sqrt{8 T \kappa_\infty \tau/\regsq(T)}$ leads to the desired result.

\section{Proof of Theorem~\ref{thm:sample}}
\label{app:proofthmsample}

\thmSample*

We elide the contextual dependence here, as $x_t$ is a constant for all
of these operations.

Define $$
\begin{aligned}
g(a; \beta) &\ldef \frac{\tau}{1 + \gamma \max\left(0, \widehat f_t(a) - \beta\right)}, \\
\beta_{\min} &\ldef \frac{1-\tau}{\gamma}, \\
\beta_{\max} &\ldef 1,
\end{aligned}
$$ where $\widehat f_t(a) \in [0, 1]$.  We note the following properties:
$$
\begin{aligned}
g(a; \beta) &\in [0, \tau], \\
\frac{d}{d\beta} g(a; \beta) &\in [0, \gamma g(a; \beta)], \\
\mathbb{E}_{a \sim \mu}\left[g(x, \beta_{\min})\right] &\leq 1, \\
\mathbb{E}_{a \sim \mu}\left[g(x, \beta_{\max})\right] &\geq 1, \\
\mathbb{E}_{a \sim \mu}\left[g^2(x, \beta)\right] &\leq \tau \mathbb{E}_{a \sim \mu}\left[g(x, \beta)\right].
\end{aligned}
$$

\paragraph{Fixed $\beta$ bound}
With $n$ samples we can estimate the integral $z(\beta) \doteq
\mathbb{E}_{a \sim \mu}\left[g(a, \beta)\right]$ at any fixed $\beta$
from the empirical mean $\bar{z}(\beta)$ via $$
\begin{aligned}
z(\beta) &\in \bar{z}(\beta) \pm \left( \sqrt{\frac{2 \mathbb{E}_{a \sim \mu}\left[g^2(a, \beta)\right] \ln(2/\delta)}{n}} + \tau \frac{\ln(2/\delta)}{3 n} \right) & \left(\text{Bernstein}\right) \\
%&\doteq  \pm \left( \sqrt{\frac{2 \mathbb{E}_{a \sim \mu}\left[z^2(a, \beta)\right] \ln(2/\delta)}{n}} + \tau \frac{\ln(2/\delta)}{3 n} \right) & \left(\bar{g}(\beta) \doteq \frac{1}{n} \sum_i \frac{\tau}{1 + \gamma \max\left(0, \hat{f}(a_i) - \beta\right)} \right) \\
&\in \bar{z}(\beta) \pm \left( \sqrt{\frac{2 \tau z(\beta) \ln(2/\delta)}{n}} + \tau \frac{\ln(2/\delta)}{3 n} \right)  & \left(\text{self-bounding}\right) \\
&\in \bar{z}(\beta) \pm \left( \frac{1}{2} z(\beta) + \frac{4 \tau \ln(2/\delta)}{n} + \tau \frac{\ln(2/\delta)}{3 n} \right),  & \left(\text{AM-GM}\right) \\
% z(\beta) &\in \left[ \frac{2}{3} \bar{z}(\beta) - \frac{2}{3} \frac{13 \tau \ln(2/\delta)}{3 n} , 2 \bar{z}(\beta) + 2 \frac{13 \tau \ln(2/\delta)}{3 n} \right].
z(\beta) &\in \left[ \frac{2}{3} \left( \bar{z}(\beta) - \frac{26 \tau \ln(2/\delta)}{3 n}\right) , 2 \left( \bar{z}(\beta) + \frac{26 \tau \ln(2/\delta)}{3 n} \right) \right].
\end{aligned}
$$ with probability at least $1 - \delta$.

\paragraph{Picking the $\beta$ grid}
Suppose $$
\frac{26 \tau \ln(2/\delta)}{3 n} \leq \frac{1}{8},
$$ then $$
z(\beta) \in \left[ \frac{2}{3} \bar{z}(\beta) - \frac{1}{12}, 2 \bar{z}(\beta) + \frac{1}{4} \right],
$$
therefore $$
\begin{aligned}
\bar{z}(\beta) \in \left[ \frac{3}{16}, \frac{3}{8} \right] &\implies z(\beta) \in \left[ \frac{1}{24}, 1 \right]. \\
\end{aligned}
$$ Thus if we can evaluate $\bar{z}(\beta)$ on a grid where it increases by at most a factor of 2, then we will obtain a $\beta^*$ such that $z(\beta^*) \in \left[ \frac{1}{24}, 1 \right]$.

Using the assumptions, $$
\begin{aligned}
\bar{z}'\left(\beta\right) &\leq \gamma \bar{z}(\beta) \\
\implies \bar{z}(\beta) &\leq \bar{z}(\beta_0) \exp\left(\gamma  \left(\beta - \beta_0\right)\right)
\end{aligned}
$$
hence evaluation over a grid spaced as
$\Delta \beta = \log(2) \gamma^{-1}$ will ensure
$\bar{z}(\beta)$ does not increase by more than a
factor of 2.  Using a union bound over these
points we need $$
\begin{aligned}
\frac{\beta_{\max} - \beta_{\min}}{\Delta \beta} &\leq \frac{\tau + \gamma}{\log(2)}, \\
\frac{26 \tau \left( \log(2 \log(2)) + \log(\tau + \gamma) - \log(\delta) \right)}{3 n} &\leq \frac{1}{8}, \\
8 \frac{26 \tau \left( \log(2 \log(2)) + \log(\tau + \gamma) - \log(\delta) \right)}{3} &\leq n,
\end{aligned}
$$ thus $n = O\left(\tau \log(\nicefrac{(\tau + \gamma)}{\delta})\right)$.

\section{Explanation of Algorithm~\ref{alg:normcs}}
\label{app:explainnormcs}

Note the following discussion is localized
to a single invocation of \cref{alg:normcs},
and therefore we elide the contextual
dependence.

Using the notation from the proof of \cref{thm:sample}, note that
 $z(\beta) \ldef \mathbb{E}_{a \sim \mu}\left[g(x, \beta)\right]$
is continuous and non-decreasing in $\beta$.
Fix $\kappa_{\infty} > 1$ and define $$
\begin{aligned}
\beta_{\kappa_\infty} &\ldef \sup \left\{ \beta \left| z(\beta) \leq \kappa_{\infty}^{-1} \right. \right\}, \\
\beta_1 &\ldef \inf \left\{ \beta \left|  z(\beta) \geq 1 \right. \right\}.
\end{aligned}
$$
Given a failure probability $\delta$, we
will construct an lower confidence sequence
$L_n$ for $\beta_1$ and an upper
confidence sequence $U_n$ for $\beta_{\kappa_{\infty}}$,
each with failure probability $\delta/2$,
i.e., a pair of adapted random processes
$L_n$ and $U_n$ satisfying
\begin{align}
\mathbb{P}\left(\forall n \in \mathbb{N}: \beta_{\kappa_{\infty}} \leq U_n\right) &\geq 1 - \nicefrac{\delta}{2}, \label{eqn:uppercs} \\
\mathbb{P}\left(\forall n \in \mathbb{N}: L_n \leq \beta_1 \right) &\geq 1 - \nicefrac{\delta}{2}, \label{eqn:lowercs}
\end{align}
where our random processes are defined
on the discrete-time filtered probability space
$(\Omega, \mathcal{F}, \left\{ \mathcal{F}_n \right\}_{n \in \mathbb{N}}, \mathbb{P})$
generated by the sampling oracle in
line \ref{alg:normcs:sample}
of \cref{alg:normcs}.  Standard techniques
for achieving \cref{eqn:uppercs} and
\cref{eqn:lowercs} are described further
below: for now, assuming those properties,
note that whenever $L_n \geq U_n$, we can
conclude with probability at least
$(1 - \delta)$ that $$
\beta \in [U_n, L_n] \implies z(\beta) \in \left[\frac{1}{\kappa_{\infty}}, 1\right]
$$
which is the desired property from \cref{eqn:findbeta}.  Because $z(\beta)$
is non-decreasing and we want the smallest
$\kappa_t$ possible, we use the largest
$\beta$, and hence return $L_n$ on line \ref{alg:normcs:stoppingrule}
of \cref{alg:normcs}.

To achieve \cref{eqn:uppercs} and \cref{eqn:lowercs}, we combine betting
martingales with a no-regret algorithm,
aka ONS-m.~\citep{waudby2020estimating}
To ease exposition, we describe
the lower bound only; the upper bound
is analogous.  For the lower bound we
define the wealth process $$
\begin{aligned}
W^{(-)}_n(\nu; \beta) &= \prod_{m=1}^n \left(1 + \nu_m \left(1 - g(A_m; \beta)\right)\right),
\end{aligned}
$$ where $A_m$ is the sequence of actions
generated by line \ref{alg:normcs:sample} of \cref{alg:normcs};
and $\nu_n \in [0, (\tau - 1)^{-1})$ is a predictable
betting sequence (to be specified below).
This wealth process
is a non-negative martingale with
initial value of 1 when evaluated at
$\beta = \beta_1$ and therefore due to
Ville's inequality $$
\mathbb{P}\left(\forall n \in \mathbb{N}:
W^{(-)}_n(\nu; \beta) \leq \nicefrac{2}{\delta}\right) \geq 1 - \nicefrac{\delta}{2}.
$$
Because $g(\cdot, \beta)$ is non-decreasing
in $\beta$,
it follows $L_n = \sup \left\{ \beta \left| W_n^{(-)}(\nu; \beta) \leq \nicefrac{2}{\delta} \right. \right\}$ is a lower confidence sequence for $\beta_1$.  It remains to
specify the betting process $\nu_n$: we
use online Newton step to choose bets
that maximize the (log) wealth,
using loss $\left(-\log W^{(-)}_m(\cdot, L_{m-1})\right)$, and constraining the bet sequence
$\nu \in [0, \nicefrac{1}{2 \tau}]$ to ensure
a bounded gradient.

The upper bound is similar, but using a martingale of the form
$
W^{(+)}_n(v; \beta) = \prod_{m=1}^n \left(1 + v_m \left(g(A_m; \beta) - \kappa_{\infty}^{-1} \right)\right),
$ and constraining the bet sequence $v \in [0, \nicefrac{\kappa_{\infty}}{2}]$.

\section{Proof of Theorem~\ref{thm:normcs}}
\label{app:proofthmnormcs}

\thmNormCS*

This uses the notation from
\cref{app:explainnormcs}.

From
\citet[Corollary 1]{waudby2020estimating},
$L_n$ and  $U_n$ satisfy \cref{eqn:uppercs}
and  \cref{eqn:lowercs} respectively.
Therefore, whenever $L_n \geq U_n$,
given the monotonicity of $g(\cdot; \beta)$
wrt $\beta$, \cref{eqn:findbeta} holds with probability at least $(1 - \delta)$.

\section{Online Regret Experiment: Additional Details}
\label{app:onlineregretexperiment}
We perform the online regret experiment using twenty regression datasets hosted on OpenML~\citep{vanschoren2014openml} and released under a CC-BY\footnote{\url{https://creativecommons.org/licenses/by/2.0/}} license. The exact data ids for these datasets are: 150, 422, 1187, 41540, 41540, 42225, 42225, 44025, 44031, 44056, 44059, 44069, 44140, 44142, 44146, 44148, 44963, 44964, 44973, and 44977.

For large datasets a random subset of $80,000$ examples is selected. Features in every data set are transformed so that the $i$-th feature in sample $x_t^i$ is shifted by $\min_t x_t^i$ and scaled by $\nicefrac{1}{\max_t x_t^i -\min_t x_t^i }$. This transformation is applied to the labels $y_t$ as well so that for every label $y_t \in [0,1]$.

During evaluation of \smthigw and \cappedIGW the contexts $x_t$ is revealed to the learners in batches of eight. The learners then pick an action to play for each context in the batch. After picking their actions learners then receive the loss $\ell_t(a_t) = |a_t-y_t|_1$ for each of the selected actions. This process continues until all examples in a dataset are exhausted.

Both \smthigw and \cappedIGW assume access to a $\wh{f}_t$ and \smthigw also assumes access to an $\arg \min$ orcale to compute $\wh a_t$. To satisfy these requirements we mirror the implementation pattern of \citet{zhu2022contextual} where $\theta$ are learned parameters, $\wh f_t(x,a;\theta) := g(\wh a(x;\theta)-a;\theta)$, and $g$ is defined so that its global minimizer is $0$. With $\theta := (u;w;q;z;\zeta)$ our experiment defines $\wh a(x; \theta) = \sigma\left(u+\langle x, w \rangle \right)$ where $\sigma$ is the sigmoid function and, given $z = \wh a(x;\theta) - a$,
\begin{equation}
    \wh g(x,a;\theta) =
    \begin{cases}
     q + \langle w, (z, z^{3/2}, z^2) \rangle & \text{if}\ z \geq 0\\
     q + \langle \zeta, (|z|, |z|^{3/2}, |z|^2) \rangle & \text{if}\ z < 0.\\
    \end{cases}.\label{eq:argmax_plus_dispersion}
\end{equation}
To optimize $\theta$ we use a mean squared error loss with Adam \citep{kingma2014adam} in PyTorch \citep{Paszke_PyTorch_An_Imperative_2019}.

We use the Corral meta-algorithm from \citet{zhu2022contextual} to select the smoothness parameter $\tau$ for both \smthigw and \cappedIGW. The hyperparameter settings for the meta-algorithm were optimized and fixed globally to give the best average performance across all experiment datasets. For \smthigw we use $\eta := 0.3$ and select $\tau$ from the set $\{2, 3.76, 7.05, 13.24, 24.87, 46.7, 87.7, 164.69, 309.27, 580.77, 1090.6, 2048\}$. For \cappedIGW we use $\eta := 0.3$ and select $\tau$ from $\{6, 9.57, 15.28, 24.37, 38.89, 62.05, 99.01, 157.98, 252.08, 402.21, 641.77, 1024\}$.

\section{Offline Utility Experiment: Additional Details}
\label{app:offlineregretexperiment}

For the offline experiment we use data exhaust from the online experiment which takes the form of {$(x_t, a_t, \P_t(a_t), \ell_t(a_t))$}. Because of this the datasets for the offline experiment are identical to those in the online experiment as are the dataset transformations, both of which are described in \cref{app:onlineregretexperiment}.

The offline learners use the same functional form as the online learners; that is $f(x,a;\theta) := g(a^\star(x;\theta)-a;\theta)$ with the definition of $g$ given in \cref{eq:argmax_plus_dispersion}. The offline experiment uses a more complex form for $a^\star(x;\theta)$ than the online learners. Rather than one linear layer with a sigmoid output the offline learners use a three layer feedforward neural network with width equal to the number of features in a dataset, ReLU activation functions, and a sigmoid output.

We optimize offline learner parameters $\theta$ using mean squared error loss with Adam \citep{kingma2014adam} in PyTorch \citep{Paszke_PyTorch_An_Imperative_2019}. When using clipped IPS~\cite{strehl2010learning} we multiply each mean squared error by its importance weight. It is known that this is not an optimal way to perform importance updates \cite{karampatziakis2011online}. Even so, the offline learners still benefit from the importance weighted updates when using \cappedIGW exhaust. During testing our offline learners follow the policy $\pi^\star(x) := a^\star(x;\theta)$.

\section{\cappedIGW Sensitivity to \texorpdfstring{$\kappa_\infty$}{κ-∞}: Additional Details}
\label{app:sensitivityexperiment}

To further understand how $\kappa_\infty$ influences experimental outcomes we look here at the probabilities logged during online analysis along with the importance weights used during offline analysis. We see in \cref{fig:prob} that for our 20 datasets as $\kappa_\infty$ became larger logged \cappedIGW probabilities tended toward 0. In turn, we see a greater spread in offline importance weights from this data \cref{fig:weight}.

\begin{figure}[t]
\begin{minipage}[t]{0.48\linewidth}
\vskip 0pt
{
    \includegraphics[width=1\textwidth]{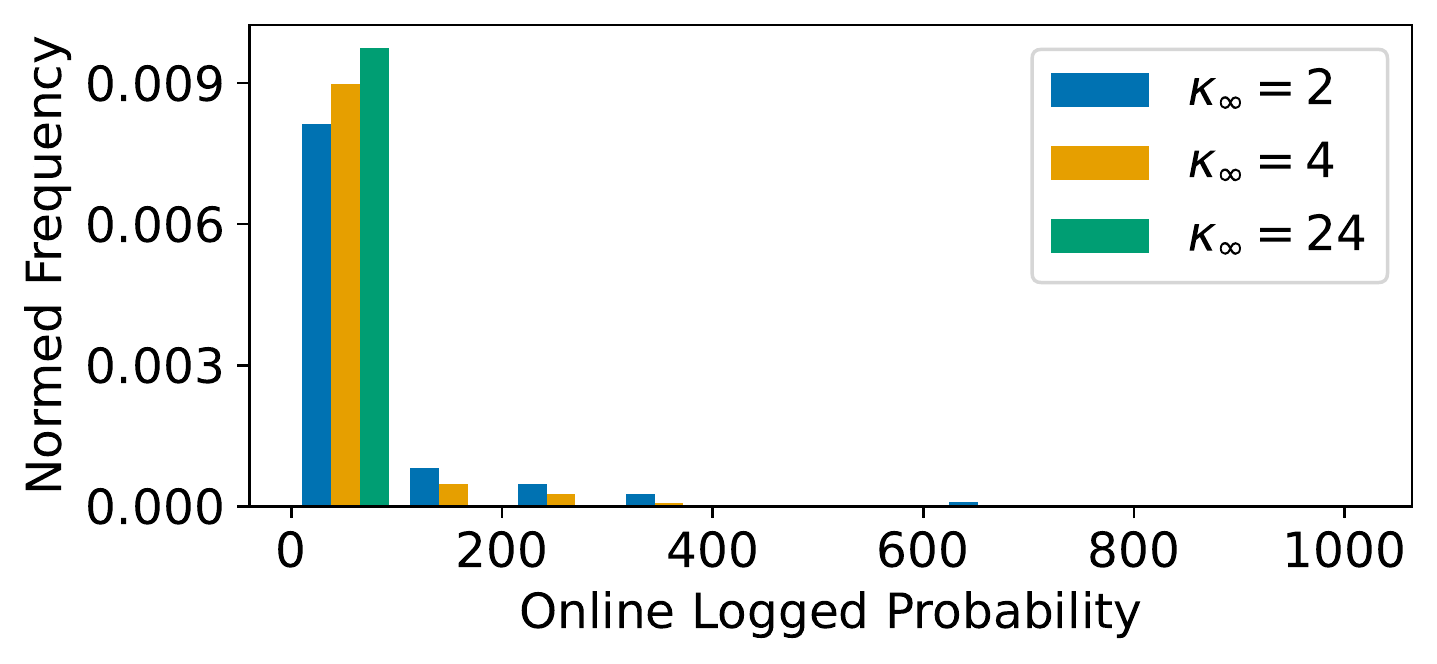}
    \vspace{-.5cm}
    \caption{As $\kappa_\infty$ grows online probability is increasingly near 0. \label{fig:prob}}
}
\vspace{-0.5cm}
\end{minipage}
\hfill
\begin{minipage}[t]{0.48\linewidth}
\vskip 0pt
{

    \includegraphics[width=1\textwidth]{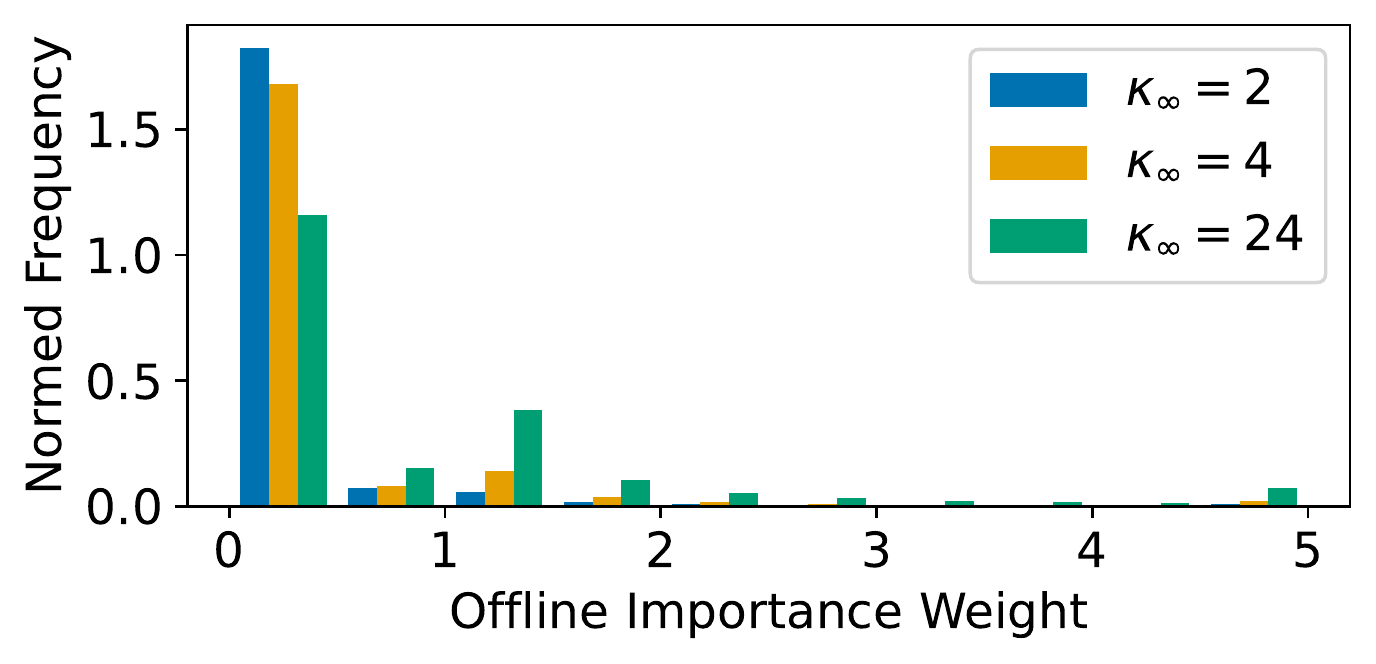}
    \vspace{-.67cm}
    \caption{As $\kappa_\infty$ grows offline weight has a greater spread. \label{fig:weight}}
}
\end{minipage}
\end{figure}

Another perspective can be found by looking at the distribution of actions played by our learners. For this we recorded the distance a learner's played action was from what the learner believed the greedy action was. This perspective is only useful in these experiments due to the unimodal implementation of $\hat f_t$ (see \cref{app:onlineregretexperiment}). We see that as $\kappa_\infty$ increases so to does the spread of actions played around the believed argmax \cref{fig:k_spread}.

\begin{figure}[t]
\includegraphics[width=.55\textwidth]{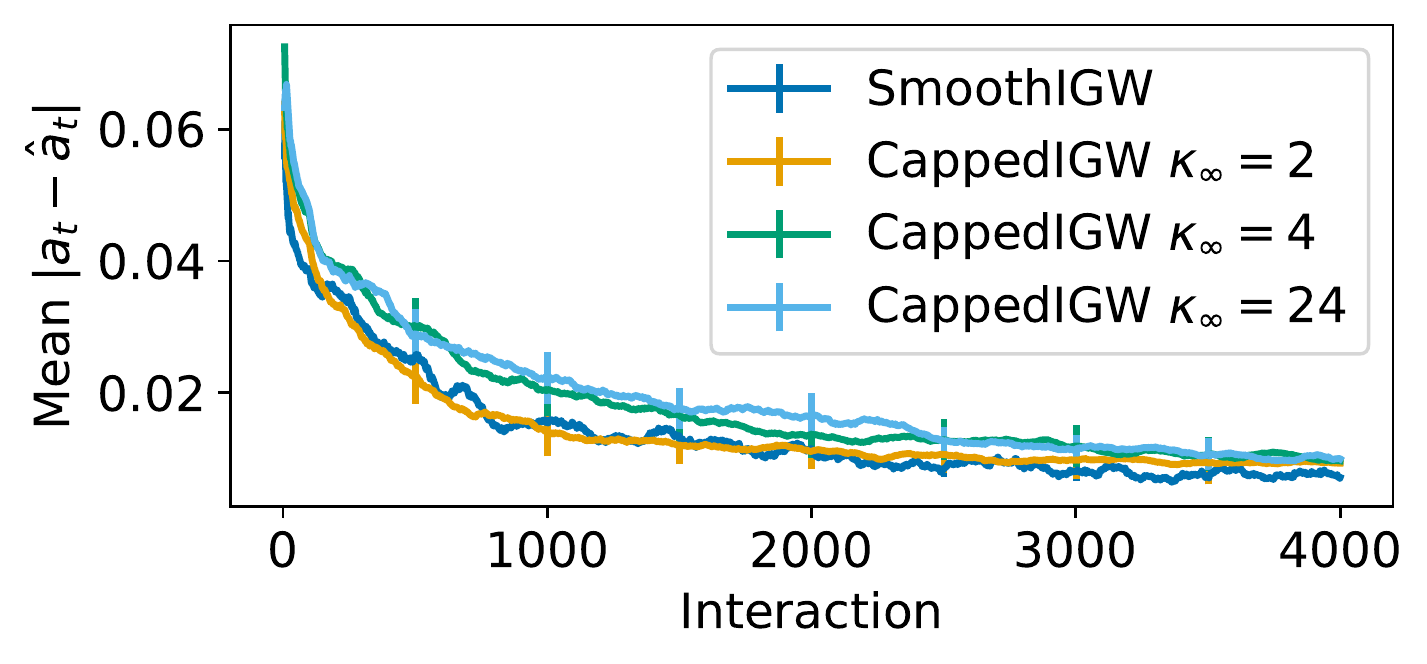}
\centering
\caption{As $\kappa_\infty$ grows actions have a greater spread around what the learners believes the argmax $\hat a_t$ is.\label{fig:k_spread}}
\end{figure}

\end{document}